\def\eqref#1{equation~\ref{#1}}
\def\1{\bm{1}}
\def\ve{{\bm{e}}}
\def\vg{{\bm{g}}}
\def\vu{{\bm{u}}}
\def\vx{{\bm{x}}}
\def\vy{{\bm{y}}}
\def\mW{{\bm{W}}}
\DeclareMathAlphabet{\mathsfit}{\encodingdefault}{\sfdefault}{m}{sl}
\SetMathAlphabet{\mathsfit}{bold}{\encodingdefault}{\sfdefault}{bx}{n}
\def\gA{{\mathcal{A}}}
\def\gC{{\mathcal{C}}}
\def\gF{{\mathcal{F}}}
\def\gP{{\mathcal{P}}}
\def\gX{{\mathcal{X}}}
\def\gY{{\mathcal{Y}}}
\newcommand{\E}{\mathbb{E}}
\newcommand{\R}{\mathbb{R}}
\title{Diagnosing and Rectifying Vision Models \\ using Language}
\author{Yuhui Zhang, Jeff Z. HaoChen, Shih-Cheng Huang, Kuan-Chieh Wang, \\ \textbf{James Zou, Serena Yeung} \\
Stanford University, Stanford, CA 94305, USA \\
\small{\texttt{\{yuhuiz, jhaochen, mschuang, wangkua1, jamesz, syyeung\}@stanford.edu}}
}
\begin{document}

\maketitle
\begin{abstract}
Recent multi-modal contrastive learning models have demonstrated the ability to learn an embedding space suitable for building strong vision classifiers, by leveraging the rich information in large-scale image-caption datasets. Our work highlights a distinct advantage of this multi-modal embedding space: the ability to diagnose vision classifiers through natural language. The traditional process of diagnosing model behaviors in deployment settings involves labor-intensive data acquisition and annotation. Our proposed method can discover high-error data slices, identify influential attributes and further rectify undesirable model behaviors, without requiring any visual data. Through a combination of theoretical explanation and empirical verification, we present conditions under which classifiers trained on embeddings from one modality can be equivalently applied to embeddings from another modality. On a range of image datasets with known error slices, we demonstrate that our method can effectively identify the error slices and influential attributes, and can further use language to rectify failure modes of the classifier.
\end{abstract}

\section{Introduction}
\label{sec:introduction}

Recent models trained using multi-modal contrastive learning have leveraged large-scale datasets of aligned image-caption pairs to obtain shared embedding spaces that capture rich visual and textual features. The learned image and text encoders resulting from multi-modal contrastive learning have been demonstrated to be effective feature extractors that can be used to train strong single-modality classifiers~\citep{radford2021learning,jia2021scaling,yuan2021florence}. In this work, we show how visual classification models obtained through multi-modal contrastive learning, as described above, offer a significant additional advantage: the ability to use language to probe and diagnose the behavior of the vision models.

Model diagnosis aims to gain a systematic and comprehensive understanding of when and why models fail. This is a critical quality assurance process to prevent unexpected and catastrophic failures of models in high-stake settings. A growing body of work has proposed methods for addressing this need. For example, error slice discovery methods aim to find subsets of inputs with similar characteristics where the model performs significantly worse~\citep{d2022spotlight,eyuboglu2021domino}. Interpretability methods aim to understand the black-box process of model prediction and thus the reasons why models fail for certain inputs~\citep{ribeiro2016should,lundberg2017unified,koh2020concept}.
In addition, model diagnosis is relevant to \emph{model auditing}, an important topic that also deals with identifying model failures and sensitive attributes~\citep{raji2020closing}, and has a broad societal impact in terms of AI accountability and integration~\citep{buolamwini2018gender,mitchell2019model,gebru2021datasheets}.

While these prior efforts have made progress in vision model diagnosis, they all suffer from a critical Achilles' heel ---  \emph{susceptibility to lack of visual data}. 
Curated training and test sets from the same data distribution are typically used to develop vision models. Even if models achieve perfect performance on these datasets, their performance can degrade drastically when deployed in-the-wild, due to distribution shifts~\citep{koh2021wilds,wiles2021fine}. Yet most existing model diagnosis methods require visual examples of failure modes (e.g., present in the test set) to discover them. As a result, using these methods is reliant on efforts to collect large-enough datasets to cover all data distributions and potential failure modes of interest, which is often impractical or infeasible.

\begin{figure}[!tb]
\centering
\includegraphics[width=\textwidth]{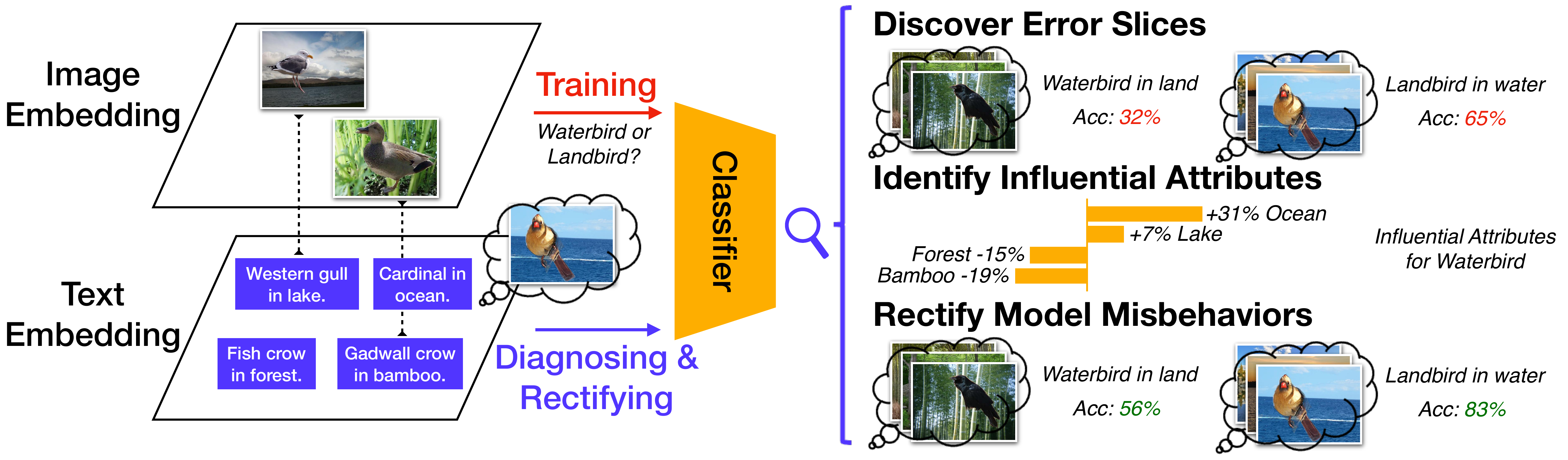}
\vspace{-1.4em}
\caption{
\textbf{Overview of our approach, DrML, that diagnoses and rectifies vision models using language.}
Our approach leverages the shared image and text representation space learned by multi-modal contrastive learning. We find that classifiers trained on embeddings from one modality can be equivalently applied to embeddings from another modality, despite the fact that embeddings from these two modalities are distinctly separated. 
This cross-modal transferability phenomenon enables us to diagnose a vision model by training it on the image embedding space and probing it with text embeddings. 
The use of language allows us to generate a large set of diverse and novel inputs to discover error slices, identify influential attributes, and rectify model misbehaviors.
}
\vspace{-1.4em}
\label{fig:overview}
\end{figure}

The goal of our work is to circumvent this need to collect test data representing all data distributions of interest, and instead use natural language input to diagnose vision classifiers.  
It is often easier to generate a set of diverse natural language inputs by combining known attributes and prompt generators than to collect a set of image inputs representing the same desired concepts. We observe that vision classifiers trained on image embeddings from a shared image-text embedding space suggest the possibility of leveraging text embeddings as a proxy for image embeddings. Multi-modal contrastive losses are frequently used to learn such shared embedding spaces. However, while these losses encourage image and text embeddings to be closer for aligned pairs than for mismatched pairs, there is no guarantee that in practice, using text embeddings as input into a vision classifier trained on the image embeddings will result in the same predictions.
In this work, we first verify that text inputs can indeed work as good proxies to image inputs trained on a shared image-text embedding space obtained through contrastive learning. We refer to this as \emph{cross-modal transferability}.

Based on the phenomenon of cross-modal transferability, we then present \textbf{DrML} for \textbf{D}iagnosing and \textbf{R}ectifying Vision \textbf{M}odels using \textbf{L}anguage. We show that DrML can use language to \emph{diagnose} vision models in two different ways: discovering error slices including concepts for which we have no visual data, and identifying attributes that have the greatest impact on model predictions. 
Finally, we present a method that uses language to \emph{rectify} undesirable behaviors without requiring the collection of more visual data.
Figure~\ref{fig:overview} illustrates our framework for diagnosing and rectifying vision models using language. 
On three image datasets representing the three most common types of model failure modes, we demonstrate that DrML can effectively identify error slices and influential attributes, and can further rectify these model failure modes using language.

In summary, our contributions are:
\begin{enumerate}[topsep=0pt,itemsep=2pt,parsep=2pt,leftmargin=18pt,rightmargin=4pt]
    \item We present a theoretical explanation of when cross-modal transferability happens (Section~\ref{sec:approach:cross_modal_transferability}), and empirically verify that the assumptions required by the analysis is true in practice across a range of multi-modal contrastive models and datasets (Section~\ref{sec:experiments:cross_modal_transferability}).
    \item We propose DrML, a framework for diagnosing vision models using natural language, including error slice discovery and influential attribute identification. We empirically validate DrML by simulating common types of failure modes using the Waterbirds~\citep{sagawa2019distributionally}, FairFace~\citep{karkkainen2021fairface}, and dSpitesV~\citep{dsprites17} datasets, and show the effectiveness of our method in identifying known error slices and influential attributes.
    \item We further demonstrate that DrML can rectify undesirable model behaviors and improve model performance with respect to the identified error slices and influential attributes, by fine-tuning the vision classifier using text embeddings constructed from the diagnosis process.
\end{enumerate}

\section{Approach}
\label{sec:approach}

\newtheorem{definition}{Definition}
\newtheorem{proposition}{Proposition}[section]
\newcommand{\lossq}{\mathcal{L}_{\text{quad}}}

We first define basic notations used in this paper.
Given a pre-trained multi-modal contrastive model, along with an image $X \in \gX$ or text $Y \in \gY$ as input, we can obtain their $l_2$-normalized embeddings $\vx$ or $\vy$ from the image encoder $f_x: \gX \mapsto \R^d$ or the text encoder $f_y: \gY \mapsto \R^d$, respectively, where $d$ is the dimension of the shared multi-modal embedding space. We can build classifiers $h: \R^d \mapsto \gC$ such as a linear layer or multi-layer perception on the shared embedding space to predict the label $c \in \gC$ given an image embedding or text embedding.  We focus on the case of vision classifiers trained using image embeddings. 

\subsection{Text Embeddings as Proxies for Image Embeddings} 
\label{sec:approach:cross_modal_transferability}

The core of our work hinges on the ability to use text as a proxy for image inputs, thereby enabling us to use language to diagnose vision models. Here we describe our approach to analyze if this is feasible in practice --- \emph{are text inputs good proxies for images in contrastive representation space?}

\textbf{Cross-modal Transferability.} To answer the question, we first define cross-modal transferability. Let $P_\mathcal{D}$ be the joint data distribution over image-text pairs. For $X, Y \sim P_\mathcal{D}$, we denote $\vx = f_x(X)$ and $\vy = f_y(Y)$ the corresponding image and text embeddings respectively. We say that a vision classifier $h$ achieves cross-modal transferability when it outputs similar predictions on $\vx$ and $\vy$. In other words, the difference across the prediction pair is small:
\begin{gather*}
    \mathbb{E}_{\vx, \vy}[D(h(\vx), h(\vy))] \approx 0, \label{eq:cross_modal_transfer}
\end{gather*}
where $D(\cdot, \cdot)$ measures the difference between predictions, e.g. the 0-1 loss $D(u, v) = \mathbf{1}_{u\neq v}$.

\textbf{Modality Gap.} While intuition suggests that embeddings of a matched image-caption pair should be close, recent work shows instead that the embeddings are approximately clustered per modality \cite{liang2022mind}. They refer to the distance between these clusters as the \textit{modality gap}. We define the individual-level modality gap $\vg$ as the difference between image and text embeddings for a single pair, and the class-level gap $\vg_c$ as the average difference between image and text embeddings for a given class $c \in \mathcal{C}$. Formally, the modality gap definitions are written as:
\begin{gather*} 
\vg = \vx - \vy \mbox{~~and~~}
\vg_c = \vx_c - \vy_c, \ \ \text{where} \\
\vx_c = \E_{X \sim P_{\mathcal{D}}(X|c)} [f_x(X)], \ \ \ \vy_c = \E_{Y \sim P_{\mathcal{D}}(Y|c) } [f_y(Y)].
\end{gather*}

\textbf{Modality Gap Geometry.} We take a closer look at the modality gap geometry across a range of multi-modal contrastive models and datasets, presented in detail in Section~\ref{sec:experiments:cross_modal_transferability}, and empirically find that the following hold true:
\begin{enumerate}[topsep=0pt,itemsep=2pt,parsep=2pt,leftmargin=18pt,rightmargin=4pt]
    \item \emph{The modality gap between corresponding image and text embeddings can be approximated by a constant vector, particularly at the class level.} We verify this by computing distributions over $\|\vg\|$ (\emph{magnitude}) and $\cos (\vg, \mathbb{E}_\vg[\vg])$ (\emph{direction}).
    \item \emph{The modality gap is orthogonal to the span of image embeddings and text embeddings, and image embeddings and text embeddings have zero mean in the subspace orthogonal to the modality gap.} 
    We verify this by computing distributions over $\cos (\vx - \mathbb{E}_\vx[\vx], \mathbb{E}_\vg[\vg])$ (\emph{orthogonality}) and $\E_\vx [\vx - \vx^T \vg' \vg']_i$ (\emph{center}),  where $\vg' = \E_\vg[\vg] / \| \E_\vg [\vg]\|$ and $i \in [d]$. The subscript $i$ denotes indexing the $i$-th dimension of the vector.
\end{enumerate}

\textbf{Cross-modal Transferability under Modality Gap.}
The above findings with respect to the geometry of the modality gap indicate that the classifier input between training and cross-modal evaluation only differs in a constant $\vg$, i.e., $h(\vx) \approx h(\vy + \vg)$.
Intuitively, since the modality gap $\vg$ is an orthogonal constant to the span of embeddings, the weight matrix of the learned classifier should also be orthogonal to $\vg$.
Hence the prediction of the classifier is not affected by $\vg$.
This intuition explains why we observe strong cross-modal transferability under modality gap in practice, across different multi-modal contrastive models trained on different datasets. These results are presented in Section~\ref{sec:experiments:cross_modal_transferability}.
In the following Proposition~\ref{proposition_informal}, we further theoretically prove that a linear classifier trained with a regularized quadratic loss is \emph{guaranteed} to be orthogonal to the modality gap and hence achieves cross-modal transferability. The formal statement and proof are in Appendix~\ref{sec:supp:cross_modal_transferability_theory}.

\begin{proposition}[Informal version of Proposition~\ref{proposition:1}]\label{proposition_informal}
Suppose there exists a gap vector $\vg\in\mathbb{R}^d$ such that every pair of image embedding $\vx$ and caption embedding $\vy$ satisfies $\vg=\vx-\vy$, the gap $\vg$ is orthogonal to the span of image embeddings, and the image embeddings have zero mean in the subspace orthogonal to $\vg$. Then, any linear function minimizing a regularized quadratic loss on image embeddings achieves the same loss on text embeddings, enabling cross-modal transferability.
\end{proposition}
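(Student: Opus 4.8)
The plan is to reduce the entire statement to a single structural fact about the minimizer: that the optimal weight matrix annihilates the gap direction. Write the linear function as $h(\vx)=\mW\vx+\vb$ and the regularized quadratic objective as $\lossq(\mW,\vb)=\E_{\vx}[\|\mW\vx+\vb-\vc\|^2]+\reg\|\mW\|_F^2$, where $\vc$ denotes the (one-hot) target attached to the pair. Since $\vy=\vx-\vg$ holds for every matched pair, we have $h(\vy)=\mW\vx+\vb-\mW\vg=h(\vx)-\mW\vg$, so if the minimizer $\mW^\star$ satisfies $\mW^\star\vg=\vzero$ then $h(\vy)=h(\vx)$ pointwise. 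Because the two modalities share the same target $\vc$ on each pair and the penalty $\reg\|\mW^\star\|_F^2$ is modality-independent, pointwise equality of predictions immediately upgrades to equality of the full objective, which is exactly cross-modal transferability. Thus the whole argument rests on showing $\mW^\star\vg=\vzero$.

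To establish that, first I would exploit the orthogonality hypothesis. Since $\vg$ is orthogonal to the span of the image embeddings, every training point satisfies $\vx^\top\vg=0$. Decompose each row $\vw_i$ of $\mW$ as $\vw_i=\vw_{i,\perp}+\alpha_i\vg'$ with $\vg'=\vg/\|\vg\|$ and $\vw_{i,\perp}\perp\vg$. Then $\vw_i^\top\vx=\vw_{i,\perp}^\top\vx$, so the data-fitting term of $\lossq$ depends only on the orthogonal parts $\{\vw_{i,\perp}\}$ and on $\vb$, and is entirely blind to the coefficients $\alpha_i$; the penalty, however, splits as $\reg\sum_i(\|\vw_{i,\perp}\|^2+\alpha_i^2)$ and is strictly increasing in each $|\alpha_i|$. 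Hence for any fixed orthogonal part the objective is minimized at $\alpha_i=0$, and by strict convexity for $\reg>0$ the minimizer is unique and must have every $\alpha_i=0$, i.e. $\mW^\star\vg=\vzero$. Equivalently, one can route through the ridge closed form $\mW^\star=\E[\vc\vx^\top]\,(\E[\vx\vx^\top]+\reg\mI)^{-1}$: orthogonality gives $\E[\vx\vx^\top]\vg=\E[\vx(\vx^\top\vg)]=\vzero$, so $\vg$ is a $0$-eigenvector of the second-moment matrix, $(\E[\vx\vx^\top]+\reg\mI)^{-1}\vg=\reg^{-1}\vg$, whence $\mW^\star\vg=\reg^{-1}\E[\vc(\vx^\top\vg)]=\vzero$.

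The role of the zero-mean hypothesis, I expect, is to pin down the intercept so that one and the same affine map is simultaneously optimal across modalities: with the image embeddings centered in the subspace orthogonal to $\vg$, the first-order condition in $\vb$ yields $\vb^\star=\E[\vc]-\mW^\star\E[\vx]$ with $\E[\vx]=\vzero$, carrying no hidden component along $\vg$, so that evaluating $h$ on $\vy$ reuses exactly the intercept learned on $\vx$. I would verify this by writing the normal equations in $\vb$ and checking they are unchanged under $\vx\mapsto\vy$ once $\mW^\star\vg=\vzero$ is in hand; orthogonality is clearly the ingredient doing the heavy lifting, while the centering condition guarantees intercept consistency (and matches the two geometric facts verified empirically).

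I expect the main obstacle to be the bookkeeping around the bias term and the precise form of the regularized quadratic loss in the formal statement: the $\alpha_i$-elimination is essentially immediate once the objective is split into a $\vg$-orthogonal data term plus a separable penalty, but ensuring the intercept, the targets, and the normalization are treated consistently across modalities — so that equality of predictions genuinely yields equality of the \emph{loss value}, and not merely of the $\argmin$ — is where care is needed. Everything else is the routine linear algebra of projecting onto the orthogonal complement of $\vg$ and invoking strict convexity of the ridge objective.
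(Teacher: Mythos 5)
Your skeleton matches the paper's: both arguments reduce everything to showing that the minimizer satisfies $\mW\vg=\vzero$, then conclude $h(\vy)=h(\vx)$ pointwise from $\vy=\vx-\vg$, hence equal loss. But there is a genuine gap in your key step relative to what the proposition actually formalizes. You read ``orthogonal to the span of image embeddings'' as $\vg^\top\vx=0$ for every training point. The paper's formal version (Proposition~\ref{proposition:1}) defines it as $\vg^\top\vx=\vg^\top\vx'$ for all pairs of image embeddings --- a \emph{constant}, not necessarily zero, inner product, so that $\vx=\Pi_g(\vx)+\tau\vg$ with a common scalar $\tau$. This is not pedantry: the embeddings here are $l_2$-normalized, and $\vg=\vx-\vy$ with $\|\vx\|=\|\vy\|=1$ forces $\vg^\top\vx=\|\vg\|_2^2/2>0$ whenever the gap is nonzero, so the case you prove ($\tau=0$) is precisely the case that cannot occur in this setting. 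With $\tau\neq 0$ your central claim --- that the data-fitting term is blind to the coefficients $\alpha_i$ --- fails, since $\vw_i^\top\vx=\vw_{i,\perp}^\top\vx+\alpha_i\tau\|\vg\|$; likewise $\E[\vx\vx^\top]\vg\neq\vzero$, so your closed-form route breaks as well.

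The tell is your handling of the zero-mean hypothesis: in your argument it is never actually used (as you note, pointwise equality follows from $\mW^\star\vg=\vzero$ alone, for any intercept), whereas in the paper it does the real work. The paper's model has no bias term and uses balanced targets $\tilde{\ve}_c=\ve_c-\E_{c'}[\ve_{c'}]$; its proof splits the loss into a mean term plus a deviation term, and the assumptions $\E_x[\Pi_g(\vx)]=\vzero$ and $\E_c[\tilde{\ve}_c]=\vzero$ kill all cross terms, so the entire $\mW\vg$-dependence of the objective is exactly $\tau^2\|\mW\vg\|_2^2$ from the data term plus $\reg\|\mW\vg\|_2^2/\|\vg\|_2^2$ from the penalty --- minimized at $\mW\vg=\vzero$ even though $\tau\neq 0$. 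Without the zero-mean condition a linear cross term in $\mW\vg$ survives and the ridge minimizer generally has $\mW\vg\neq\vzero$, breaking exact transferability. Your bias-augmented model could rescue the $\tau\neq 0$ case by a different mechanism --- absorb the constant $\tau\mW\vg$ into an unpenalized intercept and then let the penalty force the $\vg$-component of $\mW$ to zero --- but you never make that absorption argument, and the paper's formal statement has no intercept to absorb into. To repair the proof, either adopt the paper's mean--deviation decomposition or state and carry out the absorption step explicitly.
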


\textbf{Cross-modal Transferability by Closing the Modality Gap.}
The observation that the modality gap approximates a constant provides us another perspective to achieve cross-modal transferability --- by closing the modality gap so that there is no inconsistency when feeding embeddings from another modality.
We propose a simple technique to close the modality gap so that the gap becomes zero.
During training, instead of feeding $\vx$ to the model $h$, we feed it with $\vx - \mathbb{E}_\vx[\vx]$. During cross-modal evaluation, we feed $\vy-\mathbb{E}_\vy[\vy]$ instead of $\vy$.
With this strategy, we close the gap and observe additional improvements in cross-modal transferability compared to training with the gap.

\subsection{Diagnosing Vision Models using Language}
\label{sec:approach:diagnose}
Having established that text embeddings can be good proxies for image embeddings (Section~\ref{sec:approach:cross_modal_transferability} and ~\ref{sec:experiments:cross_modal_transferability}), we now describe DrML, which uses natural language inputs for diagnosing vision classifers.

\paragraph{Discovering Error Slices through Language.}

Deep learning models often make systematic errors on subgroups of inputs with similar attributes, referred to as \emph{error slices} and formally defined as:
\begin{gather*} 
\mathbb{S} = \{\mathcal{S} \subseteq \mathcal{X}|  e(\mathcal{S}) \gg e(\mathcal{X}) \},
\end{gather*}
where $\gX$ is a test set of images and $e(\cdot)$ is the model's error rate on the set of input images.
However, collecting a large enough test set that covers different image distributions is a fundamental challenge. The collected test set often only covers a small percentage of model failure modes (i.e., error slices) in the wild. 
In contrast, language inputs are easy to generate. 

Our proposed method, DrML, is capable of discovering error slices through language inputs. DrML works as follows:
\begin{enumerate}[topsep=0pt,itemsep=4pt,parsep=2pt,leftmargin=22pt,rightmargin=8pt]
    \item We define an \emph{attribute set} $\mathcal{A}$ related to the task.
    \item Given a specific attribute subset $\gF \subseteq \gA$, we use different prompt generators \\ $p \in \mathcal{P}: 2^\mathcal{A} \mapsto \mathcal{Y}$ to map attribute combinations to text inputs. 
\end{enumerate}
In this way, we can combine a wide range of attributes with different prompts to collect a diverse and novel set of text inputs. 
The generated text set $\mathcal{Y}$ is typically much more diverse than the available image test set $\mathcal{X}$, allowing the discovery of more comprehensive and unseen error slices. 

Importantly, DrML has two distinctive benefits over the typical approach of using an image test set. First, DrML only requires minimal effort to define a meaningful set of attributes to generate the input set, circumventing the human cost of data collection. Second, the combination of defined attributes naturally defines human-interpretable data slices, whereas image-based slice discovery methods do not directly provide a text summary of the error slice.

\paragraph{Identifying Influential Attributes through Language.}

Interpreting what attributes influence model predictions is crucial for understanding why models fail.
Since language is directly interpretable by humans, we perform counterfactual analysis using language to understand which attributes or concepts most impact model predictions. 
With $\mathcal{A}$ defined as the attribute set, we aim to identify a subset of attributes that significantly influences model predictions to any given class $c$:
\begin{gather*} 
\mathbb{A}_c = \{a \in \mathcal{A}|  s_{c}(a)  \gg 0  \},
\end{gather*}
where $s_{c}(\cdot)$ is the \emph{influence} of an attribute to class $c$. 
We measure the influence by Shapley value, a widely-used interpretation tool in machine learning~\citep{lundberg2017unified, ghorbani2019data}, which computes average prediction change with the presence and absence of this attribute:
\begin{gather*}
s_{c}(a) = \sum_{\mathcal{F} \subseteq \mathcal{A} \setminus \{a\}}  \frac{|\gF|! (|\gA|-|\gF|-1)!}{|\gA|!}  ( p_c(\mathcal{F} \cup \{a\}) - p_c(\mathcal{F}) ),
\end{gather*}
where $p_c(\cdot)$ is the average predicted probability of class $c$ on a set of inputs with certain attributes.

With natural language, we can easily compose a large set of inputs with and without that attribute and feed them to the model to calculate the influence. For example, to compute the influence of attribute ``ocean" on class ``waterbird", we can generate various text inputs such as ``A photo of \textit{species} on the ocean" and ``A photo of \textit{species}", and compute the average difference of the model predicted probabilities of ``waterbird". Note that it is particularly challenging to identify influential attributes using image inputs because it requires an extensive collection of images with attribute annotations.

\vspace{-0.9em}
\paragraph{Connection.} Discovering error slices and identifying influential attributes are important complementary applications, with the same ultimate goal —— diagnosing the model. Error slice discovery finds specific subgroups about when the model fails, while attributes provide abstract explanations of why the model fails. Meanwhile, figuring out influential attributes helps discover error slices, because attributes provide information about the space of potential error slices, and vice versa.

\subsection{Rectifying Vision Models using Language}
\label{sec:approach:rectify}

In addition to discovering errors during model diagnosis, how to rectify these errors is a practical but challenging problem, which is seldomly addressed in existing works about the model diagnosis. Our finding about cross-modal transferability enables us to rectify undesirable behaviors of vision classifiers through language. Here we propose a simple solution where we generate additional data that the model fails using language and continue training the model on these synthesized data.

Given the error slices  $\mathbb{S} = \{\mathcal{S} \subseteq \mathcal{X}| e(\mathcal{S}) \gg e(\mathcal{X}) \}$ discovered, we aim to rectify model performance on these error slices by minimizing $|\mathbb{S}|$. For each $\mathcal{S} \in \mathbb{S}$ defined by a list of attributes, we generate a large set of natural language inputs related to this slice $\mathcal{Y}_{\mathcal{S}}$ through attribute composition and prompt manipulation (Appendix~\ref{sec:supp:datasets_and_experiments}) and continue training the model on these text inputs $\mathcal{Y}_{\mathcal{S}}$. We continue training the model using the same hyperparameters as if the model is trained on corresponding images, since we have proved that texts are effective proxies of images. This simple strategy significantly improves model performances on corresponding image error slices with minimal impact on other data, and has a distinct advantage that no visual data is required for rectification.

\section{Experiments}
\label{sec:experiments}

In this section, we first demonstrate that text embeddings are good proxies for image embeddings in multi-modal contrastive representation space (Section~\ref{sec:experiments:cross_modal_transferability}). Based on that, we demonstrate how DrML successfully discovers error slices (Section~\ref{sec:experiments:discovered_error_slices}), identifies influential attributes (Section~\ref{sec:experiments:identified_influential_attributes}), and further rectifies model misbehaviors on three datasets (Section~\ref{sec:experiments:rectified_model_misbehaviors}).

\subsection{Experimental Details}

\textbf{Model Architecture.} 
We use CLIP~\citep{radford2021learning} as the shared multi-modal embedding space. For classifiers built on CLIP's embeddings, we use linear layers and multi-layer perceptrons.

\textbf{Datasets.} For cross-modality transferability (Section~\ref{sec:experiments:cross_modal_transferability}), we use the \textbf{MS-COCO} dataset~\citep{lin2014microsoft}, which includes both captions and object annotations for each image. The task is a multi-label classification problem of predicting the presence of 80 objects based on images or captions. For model diagnosis and rectification, we simulate the three common types of model failures.
For \emph{spurious correlation}, we use the \textbf{Waterbirds} dataset~\citep{sagawa2019distributionally} which asks a model to classify if a given bird image is a waterbird or a landbird. The training data contains a spurious correlation between bird species and backgrounds --- 95\% of waterbirds appear in the water, and 95\% of landbirds appear on the land.
For \emph{underrepresented data}, we use \textbf{FairFaces}~\citep{karkkainen2021fairface} which contains face images from 9 age groups and 7 race groups. The task is gender classification. To simulate the underrepresentation of minority groups, we sample races in proportion to the demographics of the state of Montana for our training set.
For \emph{unseen data}, we use \textbf{dSpritesV}~\citep{dsprites17} which contains images of shapes with different colors, sizes, and positions. The task is to classify the shape in an image. To simulate errors caused by unseen data, we only use images with orange triangles or green squares during training. 
More details are shown in the Appendix~\ref{sec:supp:datasets_and_experiments}.

\subsection{Are Text Embeddings Good Proxies for Images Embeddings?}
\label{sec:experiments:cross_modal_transferability}

We have provided theoretical explanations in Section~\ref{sec:approach:cross_modal_transferability} that a classifier's boundary is transferable across modalities if the modality gap satisfies certain geometric conditions. Here we first verify these conditions and then show empirically that closing the modality gap can improve transferability.

\textbf{Modality Gap Geometry.} 
In Table~\ref{tab:modality_gap_geometry}, we first show that \emph{the modality gap can be well approximately by a constant vector}. For instance, on MS-COCO, the class-level gaps between image and text embeddings extracted from CLIP (ViT-B/32) have almost the same magnitude ($0.88 \pm 0.04$) and direction (cosine similarity $0.94 \pm 0.04$). We then show that \emph{the modality gap is orthogonal to the span of image embeddings and text embeddings, and embeddings have zero mean in the subspace orthogonal to modality gap}. This is supported by the near-zero means with low standard deviations in ``orthogonality" and ``center" columns. Our findings here show that the assumptions required by our theory of cross-modal transferability (Section~\ref{sec:approach:cross_modal_transferability}) hold true in practice across various datasets and contrastive multi-modal models, suggesting that \emph{cross-modal transferability should be a pervasive phenomenon in multi-modal contrastive learning}.

\begin{table}[!tbp]
\scriptsize
\centering

\begin{tabular}{l|cc|cc|c|c}
\toprule
\multirow{2}{*}{\textbf{Model}} & \multicolumn{2}{c|}{\textbf{Magnitude}} & \multicolumn{2}{c|}{\textbf{Direction}} & \textbf{Orthog-} & \textbf{Center} \\
 & Individual & Class & Individual & Class & \textbf{onality} \\
\midrule
CLIP \tiny{COCO} (\citeyear{radford2021learning}) & 1.18 $\pm$ 0.03 & 0.88 $\pm$ 0.04 & 0.70 $\pm$ 0.06 & 0.94 $\pm$ 0.04 & 0.00 $\pm$ 0.06 & 0.00 $\pm$ 0.02 \\  
CLIP \tiny{ImageNet} (\citeyear{radford2021learning}) & - & 1.00 $\pm$ 0.02 & - & 0.83 $\pm$ 0.05 & 0.00 $\pm$ 0.06 & 0.00 $\pm$ 0.03 \\ 
ConVIRT (\tiny{\citeyear{zhang2020contrastive}}) & 1.22 $\pm$ 0.10 & - & 0.67 $\pm$ 0.09 & - & 0.02 $\pm$ 0.10 & 0.00 $\pm$ 0.02 \\ 
VideoCLIP (\tiny{\citeyear{xu2021videoclip}}) & 1.35 $\pm$ 0.03 & - & 0.79 $\pm$ 0.04 & - & 0.00 $\pm$ 0.06 & 0.00 $\pm$ 0.02 \\ 
CLASP (\tiny{\citeyear{CLASP})} & 1.33 $\pm$ 0.04 & - & 0.79 $\pm$ 0.12 & - & 0.03 $\pm$ 0.13 & 0.00 $\pm$ 0.02 \\ 
\bottomrule
\end{tabular}

\caption{
\textbf{Geometry analysis of modality gap for various multi-modal contrastive representation spaces.}
The modality gap approximates a constant vector, indicated by the magnitude and direction distributions. 
Modality gap is also orthogonal to the span of embeddings from two modalities, and embeddings' centers for both two modalities are zero vectors in the subspace orthogonal to the gap, indicated by the orthogonality and center distributions.
Based on our theoretical analysis, these findings suggest that cross-modal transferability is widely established in multi-modal contrastive learning.
$\pm$ connects mean and standard deviation. Detailed distributions in Figure~\ref{fig:supp_gap_geometry}.
}

\label{tab:modality_gap_geometry}
\end{table}

\begin{table}[!tb]
\scriptsize
\centering

\begin{tabular}{c|c|ccc|ccc|c}
\toprule
\multirow{2}{*}{\textbf{Modality Gap}} &\multirow{2}{*}{\textbf{Model}} & \multicolumn{3}{c|}{\textbf{Evaluation on Image}} & \multicolumn{3}{c|}{\textbf{Evaluation on Text}} & \multirow{2}{*}{\textbf{Consistency}$_\uparrow$} \\
& & Loss$_\downarrow$ & mF1$_\uparrow$ & MF1$_\uparrow$ & Loss$_\downarrow$ & mF1$_\uparrow$ & MF1$_\uparrow$ & \\
\midrule
- & Random & 0.6939 & 0.0655 & 0.0443 & 0.6938 & 0.0696 & 0.0437 & 0.8644 \\
\midrule
\multirow{2}{*}{\textbf{Default }}&\multirow{1}{*}{Linear} & 0.0501 & 0.7276 & 0.6790 &  0.1188 & 0.5642 & 0.5429 & 0.9637  \\
& \multirow{1}{*}{MLP} & 0.0480 & 0.7523 & 0.7158 &  0.0888 & 0.6350 & 0.6135 & 0.9789 \\
\midrule
\multirow{2}{*}{\textbf{Closing}} & \multirow{1}{*}{Linear} & 0.0498 & 0.7280 & 0.6777  & \textbf{0.0719} & \textbf{0.6554} & \textbf{0.6168} & \textbf{0.9842} \\
& \multirow{1}{*}{MLP} & 0.0483 & 0.7495 & 0.7130  & \textbf{0.0885} & \textbf{0.6503} & \textbf{0.6358} & \textbf{0.9806} \\
\bottomrule
\end{tabular}

\caption{
\textbf{Cross-modal transferability in multi-modal contrastive representation learning.} 
We train a classifier using CLIP's image embeddings and test the trained classifier using text embeddings on the MS-COCO multi-label classification dataset.
Despite the modality gap, classification boundaries learned from one modality are transferable to another modality. 
Closing the modality gap further improves cross-modal transferability without harm to in-modal evaluation.
Notations: mF1 - Micro F1, MF1 - Macro F1, Random - A randomly initialized linear classifier.
}
\vspace{-1.6em}
\label{tab:cross_modal_transferability}
\end{table}

\textbf{Cross-modal Transferability.}
Table~\ref{tab:cross_modal_transferability} shows the image-to-text transfer results on the MS-COCO validation set. 
Based on our theory, we indeed find that \emph{cross-modality transferability is possible regardless of the modality gap}. For instance, we find that an image-embeddings-trained linear classifier capable of achieving 67.90\% macro F1 score can maintain 54.29\% macro F1 score using text embeddings as inputs, and the consistency between predictions using images and texts is 96.37\%.
Similarly, text-to-image transfer is also possible, which is shown in Appendix Table~\ref{tab:supp:coco_transferability}.
While there exists slight degradation in performance under cross-modal evaluation, the difference in performance is relatively small, and the cross-modal transfer performance is much higher than random classification.
The same finding is observed when using multi-layer perceptrons that learn non-linear features.
As shown in the bottom half of Table~\ref{tab:cross_modal_transferability}, closing the modality gap further improves cross-modal transferability. The linear classifier achieves 9.12\%, 7.39\%, and 2.05\% absolute improvements on micro F1, macro F1, and prediction consistency for image-to-text transfer without harm to in-modality evaluation. The improvements using MLP are smaller but consistent.

\textbf{Are Generated Language Prompts Good Predictors of Error Slices?}
Here we further investigate whether our generated language prompts are good predictors of the error rate of a given data slice.
We do so by looking at the correlation between performances on \emph{generated} prompts and corresponding image slices. 
A strong correlation indicates that we can perform error slice discovery using text as proxies, which circumvents the challenges of collecting image data.

We treat each attribute subset $\gF \subseteq \gA$ as a slice. For each slice, we generate a set of text inputs $\gY_\gF$ using prompt generators $\gP$ and select all the images $\gX_\gF$ with attributes $\gF$. We compute the Spearman and Pearson correlation between model performances on $\mathcal{Y}_\gF$ and $\mathcal{X}_\gF$. Table~\ref{tab:slice_correlation} shows strong correlation between image and text slices. Furthermore, correlation can be improved by: 1) using the average probability of the label on text predictions instead of accuracy, 2) generating better text inputs via prompt engineering which composes attributes into a more fluent sentence, and 3) prompt ensemble that uses different prompts to generate more diverse inputs (details in Appendix~\ref{sec:supp:datasets_and_experiments}). 

As baselines for comparison, we use the state-of-the-art text-to-image generation model~\citep{rombach2022high} $t: \mathcal{Y} \mapsto \mathcal{X}$ to generate a set of (we use 1 or 20 in our experiment) images $\mathcal{X}'_\gF$ from text prompts $\mathcal{Y}_\gF$ and compute correlations between $\mathcal{X}'_\gF$ and $\mathcal{X}_\gF$. Our method outperforms this baseline by a large margin and does not utilize significant computational time and cost typically required for the image generation process. 
Samples of the generated image samples are shown in Appendix~\ref{sec:supp:baselines}.
Even while significant progress has been made in text-to-image generation, generating high-fidelity images that maintain the original semantics is still challenging.

\begin{table}[!tb]
\scriptsize
\centering

\begin{tabular}{p{0.3\linewidth}|cc|cc|cc}
\toprule
\multirow{2}{*}{\textbf{Method}} & \multicolumn{2}{c|}{\textbf{Waterbirds}} & \multicolumn{2}{c|}{\textbf{FairFace}} & \multicolumn{2}{c}{\textbf{dSpritesV}} \\
 & Spearman & Pearson & Spearman & Pearson & Spearman & Pearson \\
\midrule
(Base): Gen 1 Image (Prob) & 0.5822 & 0.5608 & 0.3884 & 0.3361 & 0.3059 & 0.3119 \\
(Base): Gen 20 Images (Prob) & 0.6034 & 0.5938 & 0.4288 & 0.5411 & 0.4557 & 0.5309 \\
\midrule
(1): Generate 1 Text for Slice & 0.4167 & 0.4355 & 0.0801 & 0.0957 & 0.6278 & 0.6723 \\
(2): (1) + Use Label Probability & 0.5899 & 0.5773 & 0.2065 & 0.1760 & \textbf{0.7071} & 0.7481 \\
(3): (2) + Prompt Engineering & \textbf{0.6462} & \textbf{0.6721} & \textbf{0.5669} & 0.7024 & \textbf{0.6998} & 0.7595 \\
(Ours): (3) + Prompt Ensemble & \textbf{0.6465} & \textbf{0.6776} & \textbf{0.5614} & \textbf{0.7227} & \textbf{0.7028} & \textbf{0.7918} \\
\bottomrule
\end{tabular}

\caption{
\textbf{Correlation analysis of model performance on image and text slices.}
Correlation can be improved by using label probability instead of label accuracy on text predictions, generating better text through prompt engineering and ensemble. Our approach outperforms the baseline text-to-image generation model by a large margin. The best or near-best results are bolded.}
\vspace{-1.6em}
\label{tab:slice_correlation}
\end{table}

In summary, combining the empirical findings presented in this section and the theoretical results in Section~\ref{sec:approach:cross_modal_transferability}, we show that text inputs can act as good proxies for image inputs, enabling us to diagnose vision classifiers using generated language prompts.

\subsection{Discovered Error Slices}
\label{sec:experiments:discovered_error_slices}

The strong correlation between the performances on text and image slices allows us to confidently run image slice discovery algorithm using text inputs. In this study, we use a simple error slice discovery method of sorting slices by their performances. We further marginalize attributes by merging similar slices into larger slices. In Table~\ref{tab:discovered_slices}, we summarize the most essential discovered error slices by our language-based approach on the three datasets, each representing one of the three typical model failure patterns under distribution shifts~\citep{wiles2021fine}.

For \textbf{Waterbird}, the top identified error slices are waterbirds in land and landbirds in water, which correctly corresponded to errors are caused by \emph{spurious correlations} present in the dataset. For \textbf{FairFace}, the African American population is among the top identified error slice, which also reflects their \emph{underrepresentation} in our training set. 
For \textbf{dSpitesV}, our method correctly identifies green triangles and orange square as critical error slices. Additionally, pink triangle slices are also correctly identified, since they were \emph{never seen} in the training data.
By using images to verify our discovered slices, our method not only correctly identifies the most critical error slices but also accurately predicts the slice performances on images. 

In Appendix~\ref{sec:supp:baselines}, we report results from the state-of-the-art slice discovery baseline DOMINO~\citep{eyuboglu2021domino}. When evaluated using datasets with the same distribution as the training set, DOMINO can only discover slices present in the dataset, and could not discovery errors caused by distribution shifts.

\begin{table}[htbp]
\scriptsize
\centering

\begin{tabular}{l|cc|l|cc|l|cc}
\toprule
\multicolumn{3}{c|}{\textbf{Waterbirds}} & \multicolumn{3}{c|}{\textbf{FairFace}} & \multicolumn{3}{c}{\textbf{dSpritesV}} \\
Slice & Text & Image & Slice & Text & Image & Slice & Text & Image \\
\midrule
\textbf{Wbird in L} & \textbf{0.3258} & \textbf{0.3233} & \textbf{Black} & \textbf{0.9134} & \textbf{0.8997} & \textbf{Green triangle} & \textbf{0.1641} & \textbf{0.0616} \\
\textbf{Lbird in W} & \textbf{0.7029} & \textbf{0.6524} & Indian & 0.9268 & 0.9446 & \textbf{Orange square} & \textbf{0.2990} & \textbf{0.0337} \\
Wbird in W & 0.9306 & 0.9549 & Asian & 0.9305 & 0.9381 & \textbf{Pink triangle} & \textbf{0.5044} & \textbf{0.9861} \\
Lbird in L & 0.9957 & 0.9979 & White & 0.9427 & 0.9597 & Red triangle & 0.5651 & 0.9954 \\
\bottomrule
\end{tabular}
\caption{
\textbf{Discovered error slices using language.}  
With the images used for validation, our method succeeds in discovering important error slices (bolded) and accurately predicts model performances on image slices.
Notations: Image - model accuracy using real image inputs, Text-predicted accuracy using text inputs as a proxy, W - water, L - land.
}
\label{tab:discovered_slices}
\vspace{-1.2em}
\end{table}

\subsection{Identified Influential Attributes}
\label{sec:experiments:identified_influential_attributes}

In Table~\ref{tab:identified_attributes}, we report the most influential attributes to a specific class on the same three datasets. These attributes provide a high-quality interpretation of how models predict and why they fail. For example, one of the most influential attributes for waterbird classification is ``ocean" with an influence value of 0.3062, indicating that the \emph{model predicted probability of waterbird increases by 0.3} on average when ``ocean" is present in a bird image. Since the attribute ``place" should not affect predictions, this shows an obvious error of the model. Similar findings apply to the attribute ``color" for dSpitesV. But what is more interesting is that the color ``pink" is \emph{never seen} during training but will bias the model to predict ``square" with 0.1 increased probability. On FairFace, no attribute is found to significantly influence model prediction; thus, no obvious spurious correlations were learned.

\begin{table}[!tb]
\scriptsize
\centering
\begin{tabular}{cp{0.15\linewidth}r|cp{0.08\linewidth}r|cp{0.07\linewidth}r}
\toprule
\multicolumn{3}{c|}{\textbf{Waterbirds (waterbird)}} & \multicolumn{3}{c|}{\textbf{FairFace (female)}} & \multicolumn{3}{c}{\textbf{dSpritesV (triangle)}} \\
\multicolumn{2}{c}{Attribute} & Influence & \multicolumn{2}{c}{Attribute} & Influence & \multicolumn{2}{c}{Attribute} & Influence \\
\midrule
\multirow{4}{*}{\textbf{Place}} & Ocean & 0.3062 & \multirow{4}{*}{\textbf{Age}} &Very old & 0.0229 & \multirow{4}{*}{\textbf{Color}} & Orange & 0.3736 \\
& Lake natural & 0.0713 & & Young & 0.0161 & & Red & -0.0470 \\
& Forest broadleaf & -0.1540 & & Little & -0.0079 & &  Pink & -0.1181 \\
& Bamboo forest & -0.1931 & & Infant & -0.0171 & & Green & -0.3321 \\
\bottomrule
\end{tabular}

\caption{
\textbf{Identified influential attributes using language.}
We show top 2 most positively and negatively influential attributes, which provide insights into how models predict and why they fail.
}
\vspace{-1.2em}

\label{tab:identified_attributes}
\end{table}

\subsection{Rectified Model Misbehaviors}
\label{sec:experiments:rectified_model_misbehaviors}

In Table~\ref{tab:rectified_models}, we report performances of original models and rectified models. On both Waterbirds and FairFace dataset, our simple method of continue training the model on text inputs significantly improves model performances on error slices with minor influences on other slices. We also perform ablation by only training the model on all the language inputs from scratch (Lonly), and find that continuing to train the pre-trained image model achieves better results, but even training only with language can also work reasonably.

Our approach rectifies model misbehaviors caused by spurious correlation and underrepresented data by correcting the data bias. Another series of methods to tackle these errors are robust training techniques, such as GDRO~\citep{sagawa2019distributionally} and JTT~\citep{liu2021just}, which explicitly optimizes each slice's performance during training. 
Our method outperforms JTT. While GDRO performs similarly to ours, it requires attribute annotations on images, which is highly time-consuming and cost-prohibitive for most real-world applications. Moreover, GDRO and JTT cannot fix errors on unseen data, while ours can, because our rectifying process requires no visual data.

\begin{table}[!tb]
\scriptsize
\centering

\begin{tabular}{l|cccc|c|l|cccc|c}
\toprule

\multicolumn{6}{c|}{\textbf{Waterbirds}} & \multicolumn{6}{c}{\textbf{FairFace}} \\
Slice & Original & Rectify & Lonly & JTT & GDRO & Slice & Original & Rectify & Lonly & JTT & GDRO \\
\midrule
\textbf{Wbird in L} & 0.3233 & 0.5564 & 0.5639 & \textbf{0.5865} & 0.7368 & \textbf{Black} & 0.8997 & \textbf{0.9075} & 0.8920 & 0.8920 & 0.9017  \\
\textbf{Lbird in W} & 0.6524 & \textbf{0.8271} & 0.7747 & 0.7468 & 0.8155 & Asian & 0.9381 & \textbf{0.9390} & 0.9290 & \textbf{0.9390} & 0.9384 \\
Wbird in W & 0.9549 & \textbf{0.9700} & 0.9023 & 0.9248 & 0.9474 & Indian & \textbf{0.9446} & 0.9439 & 0.9301 & 0.9406 & 0.9453 \\
Lbird in L & \textbf{0.9979} & 0.9893 & 0.9443 & 0.9764 & 0.9443 & White & 0.9597 & \textbf{0.9626} & 0.9424 & 0.9602 & 0.9588 \\

\bottomrule
\end{tabular}

\caption{
\textbf{Rectified model performances on discovered error slices.} We continue training models on language inputs corresponding to error slices (bolded) and observe significant performance improvements on these slices. GDRO not directly comparable because attribute annotations required.
}
\vspace{-1.2em}

\label{tab:rectified_models}
\end{table}

\section{Related Work \& Discussion}
\label{sec:related_works}

\paragraph{Multi-modal Contrastive Learning.} 
Many recent works in vision-language contrastive learning, such as CLIP~\citep{radford2021learning}, ALIGN~\citep{jia2021scaling}, and Florence~\citep{yuan2021florence}, have leveraged large image-caption datasets to obtain embedding spaces that capture rich visual and textual features. As a result, the learned image and text encoders are demonstrated to be strong uni-modal classifiers. In this work, we show how vision models obtained through multi-modal contrastive learning offer another significant advantage --- model diagnosis and rectification.

\textbf{Multi-modal Contrastive Representation Space Geometry.} 
Although multi-modal contrastive learning minimizes the distance between embeddings for matched pairs, prior work has shown that embeddings from two modalities are distinctively separated in the embedding space, which is referred to as modality gap~\citep{liang2022mind}.
In this work, we further analyze the modality gap geometry and connect it to the cross-modal transferability phenomenon.
Our finding is related to several recent works built on multi-modal contrastive representation spaces, such as DALL-E 2~\citep{ramesh2022hierarchical}, ClipCap~\citep{mokady2021clipcap}, and other works~\citep{cohen2022my,gal2022image}. They found that trained models can directly take cross-modal embeddings but worse than same-modal embeddings. 
We not only explain this but provide a straightforward solution to improve transferability, which can be applied to all future works built upon multi-modal embeddings.

\textbf{Slice Discovery.} 
Many recent works aim to understand model systematic errors by finding subsets of inputs with similar characteristics where the model performs significantly worse. This is referred to as slice discovery~\citep{chung2019slice,singla2021understanding,d2022spotlight,eyuboglu2021domino,jain2022distilling}. However, these algorithms fail to address the most fundamental challenge for slice discovery --- the lack of data. These works are only able to find errors that exist in the dataset. 
Our work circumvents the data challenge by performing slice discovery on the text space.

\textbf{Interpretation.}  
Many model interpretation methods have been proposed, including attribution-based~\citep{ribeiro2016should,lundberg2017unified,shrikumar2017learning} and concept-based~\citep{ghorbani2019towards,koh2020concept}. 
While these methods help in understanding the model prediction process, the outputs are complicated for humans to understand and inconsistent across models and algorithms~\citep{ghorbani2019interpretation,jain2021missingness,joshi2021review}. Others require modifications in model architectures or complex post-processings~\citep{ghorbani2019towards,koh2020concept}. 
In contrast, language is inherently understandable by humans and simple to construct. 
In this work, we interpret the model prediction process by identifying the most influential attributes using language, which provides us meaningful interpretations without pre-processing or post-processing.

\textbf{Algorithm Fairness.} 
Ensuring algorithmic fairness is key to avoiding potential harm to our society~\citep{hovy2016social,zou2018design}.
Methods for improving the fairness of machine learning algorithms is an ongoing active area of work~\citep{bolukbasi2016man,sagawa2019distributionally,sohoni2020no,ramaswamy2021fair,liu2021just}. 
Among these, a notable solution is to correct data bias, as model bias stems from data bias. 
In this work, we show that language can be used to correct data bias by generating additional data, hence improving model fairness.

\textbf{Limitations.} While our work introduces a novel and effective approach for diagnosing and rectifying visual classifiers, there are additionally important areas for future work. First, since we assume vision classifiers are built using an image-text embedding space trained through multi-modal contrastive learning, our method can also inherit limitations from the contrastive model and pre-training dataset. For example, although we aim to leverage large and general-purpose image-caption datasets in pre-training, the encoders may still not appropriately embed out-of-distribution examples far from what the contrastive model was trained on. Misaligned or inaccurate pre-training data can also affect encoder quality. Additionally, it is challenging to diagnose low-level visual
attributes that are difficult to describe in words, such as texture or object orientation~\citep{leclerc20213db}.
We consider these fruitful directions for future work. Our method will also benefit from improvements in multi-modal contrastive pre-training as these methods are improved.

\vspace{-0.8em}
\section{Conclusion}
\vspace{-0.5em}

Our work reveals a valuable advantage of using vision classifiers built on top of multi-modal embedding spaces learned through contrastive learning -- the ability to diagnose and rectify the vision classifiers using natural language inputs.  
We first use a combination of theoretical analysis and experimental findings to verify that cross-modal transferability exists; namely, that text inputs can act as good proxies for image inputs. 
This then allows us to propose and validate a framework for diagnosing and rectifying vision classifiers using natural language inputs. Our work suggests promising new directions both for achieving reliable and trustworthy computer vision models, and for the use of cross-modal transferability in other problem domains.
\section*{Ethics Statement}

One of the main contributions of our work is an approach for diagnosing and rectifying vision classifiers trained using embeddings from a multi-modal contrastive model. 
We showcase experimental results on identifying error slices and influential attributes.  
For example, our method can detect failures caused by the lack of representation of certain races in the training set.  
In our FairFace experiments, the prediction of gender (i.e., the label ``female'') given an image was affected by race (e.g., the race ``black'').  
We further show that we can rectify this behavior using our approach.  
Hence, we see our work as a contribution to the broader community concerned with model accountability and model auditing, and to improving the responsible integration of AI into society.

However, it is also important to be aware of potential negative impacts brought about by our findings. One can imagine an adversary who extends our approach and uses it to their advantage, perhaps reinforcing racial or gender biases by fine-tuning a vision model using biased language prompts. Our work also inherits limitations from the contrastive model and pre-training datasets used to obtain the image and text encoders, as described in the Discussion section of our paper. We hope that this statement raises awareness both of the importance of better model diagnosis and rectification methods and of future directions of work to address limitations and potential negative impacts.

\section*{Reproducibility Statement}

We provide open-source implementation of our work at \url{https://github.com/yuhui-zh15/drml}. The implementations will enable researchers to reproduce all the experiments described here as well as run their own analyses on additional multi-modal models and datasets.

\newpage
\bibliography{main}
\bibliographystyle{iclr2023_conference}

\newpage
\appendix
\section*{Overview of Appendix}
\label{sec:supp:overview}

In this appendix, we supplement additional details of theory, datasets, experiments, and baselines.

\begin{itemize}
    \item In Appendix~\ref{sec:supp:cross_modal_transferability}, we provide more details about the modality gap geometry, a theoretical proof of cross-modal transferability given the modality gap, additional cross-modal transferability results on MS-COCO and ImageNet.
    \item In Appendix~\ref{sec:supp:datasets_and_experiments}, we provide details of four datasets (MS-COCO, Waterbirds, FairFace, and dSpritesV) used in our experiments, including data preprocessing, attributes, and prompts. We also provide the model and experimental details.
    \item In Appendix~\ref{sec:supp:baselines}, we provide two baseline methods. First, we present the result using text-to-image generation for model diagnosis, which sometimes fails to generate fidelity images given text prompts. Second, we present the baseline method for slice discovery using DOMINO, which fails when error slices are absent in the dataset.
\end{itemize}

\section{Cross-modal Transferability}
\label{sec:supp:cross_modal_transferability}

\subsection{Modality Gap Geometry}

Figure~\ref{fig:supp_modality_gap} shows the modality gap phenomenon in various multi-modal contrastive learning models, where inputs from two modalities are embedded at arm's length in their shared representation space. This phenomenon is caused by the combined effect of model initialization and optimization. Deep neural networks have the cone effect --- encoders will only map inputs to a small cone of the entire representation space. Therefore, two cones will be created for a multi-modal model with two encoders. As a sequence, the modality gap occurs at the initialization stage. During optimization, the contrastive loss will preserve the gap due to mismatched data~\citep{liang2022mind}.

\begin{figure}[htbp]
    \centering
    \includegraphics[width=0.8\textwidth]{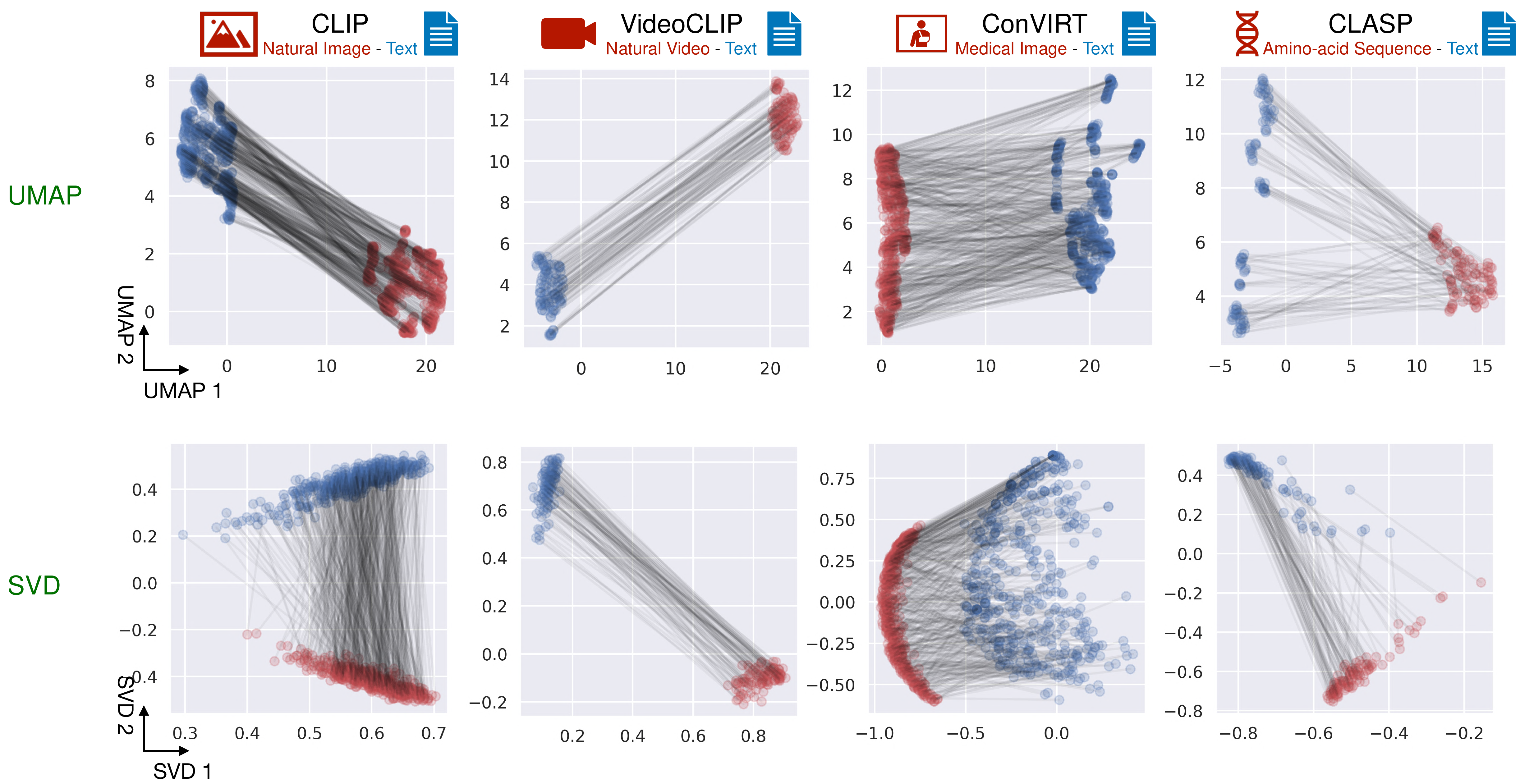}
    \caption{
    \textbf{Modality gap for multi-modal contrastive learning.}
    Embeddings from two modalities are visualized using UMAP and SVD. Figure credit:~\citet{liang2022mind}.
    }
    \label{fig:supp_modality_gap}
\end{figure}

Figure~\ref{fig:supp_gap_geometry} shows four statistics that reveal important properties of the modality gap geometry. 
\begin{itemize}[topsep=0pt,itemsep=2pt,parsep=2pt,leftmargin=18pt,rightmargin=4pt]
\item \emph{The modality gap approximates a constant vector, particularly at the class level.} We verify this by computing distributions over $\|\vg\|$ (\emph{magnitude}) and $\cos (\vg, \mathbb{E}_\vg[\vg])$ (\emph{direction}). $\vg$ is the gap between embeddings of paired data from two modalities.
\item \emph{The modality gap is orthogonal to the span of embeddings, and embeddings have zero mean in the subspace orthogonal to the modality gap.} 
We verify this by computing distributions over $\cos (\vx - \mathbb{E}_\vx[\vx], \mathbb{E}_\vg[\vg])$ (\emph{orthogonality}) and $\E_\vx [\vx - \vx^T \vg' \vg']_i$ (\emph{center}), where $\vg' = \E_\vg[\vg] / \| \E_\vg [\vg]\|$ and $i \in [d]$ denoting $i$-th dimension of the vector.
\end{itemize}
Based on our theoretical analysis in the next section, these findings suggest that cross-modal transferability is widely established in multi-modal contrastive learning.

\begin{figure}[htbp]
\centering
\includegraphics[width=0.8\textwidth]{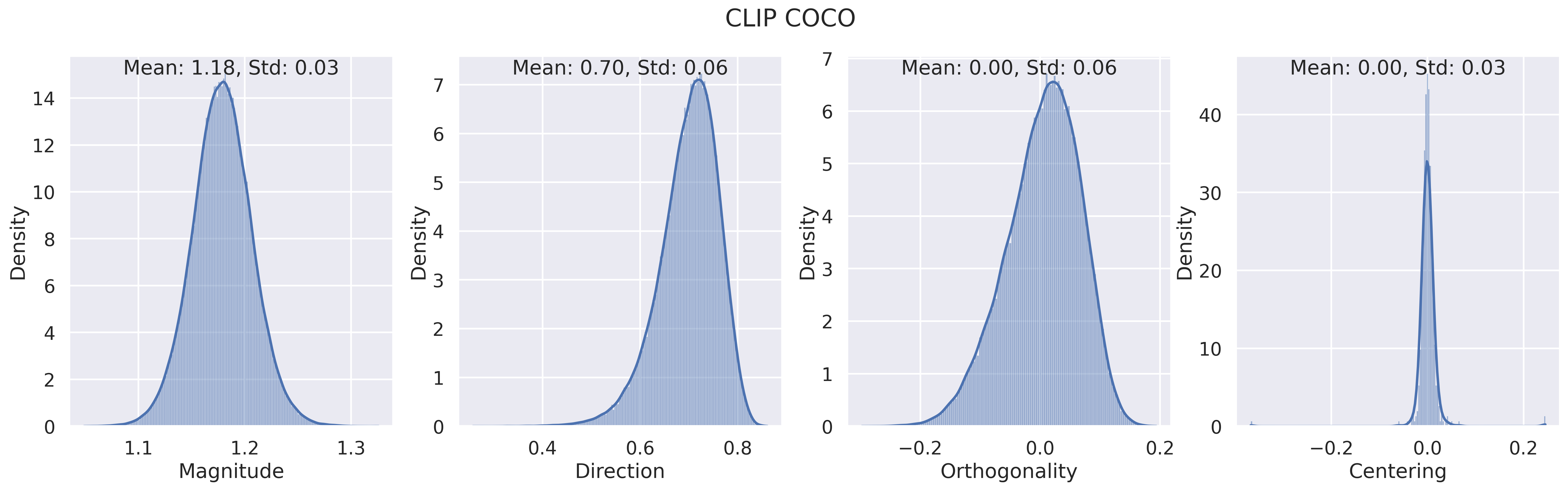}
\includegraphics[width=0.8\textwidth]{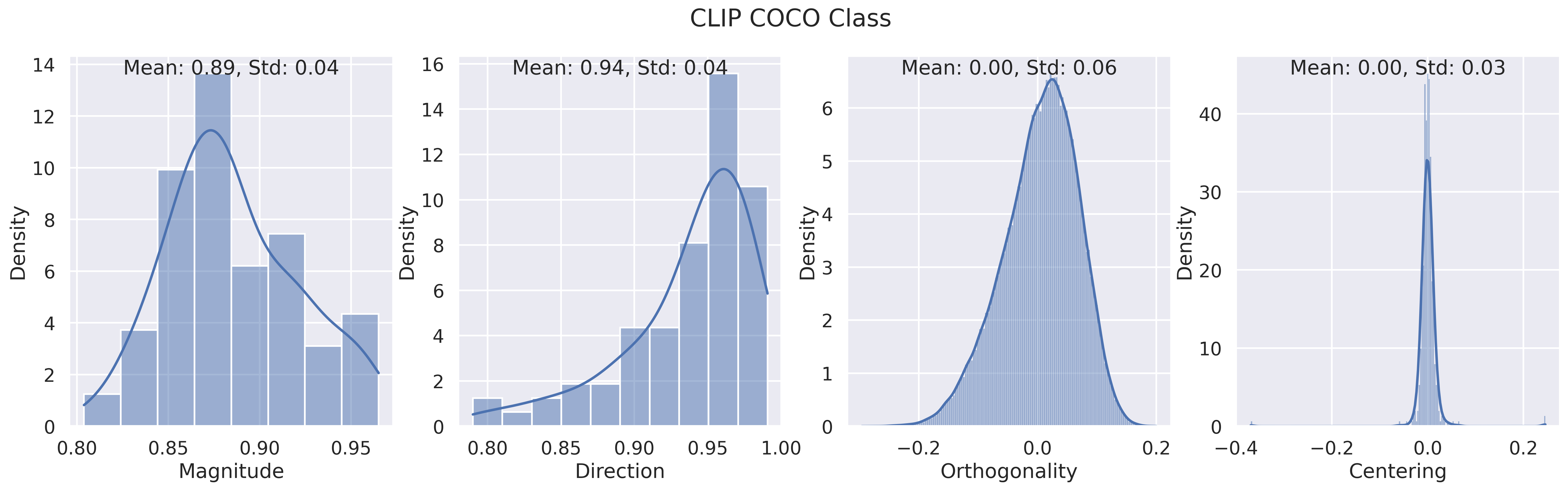}
\includegraphics[width=0.8\textwidth]{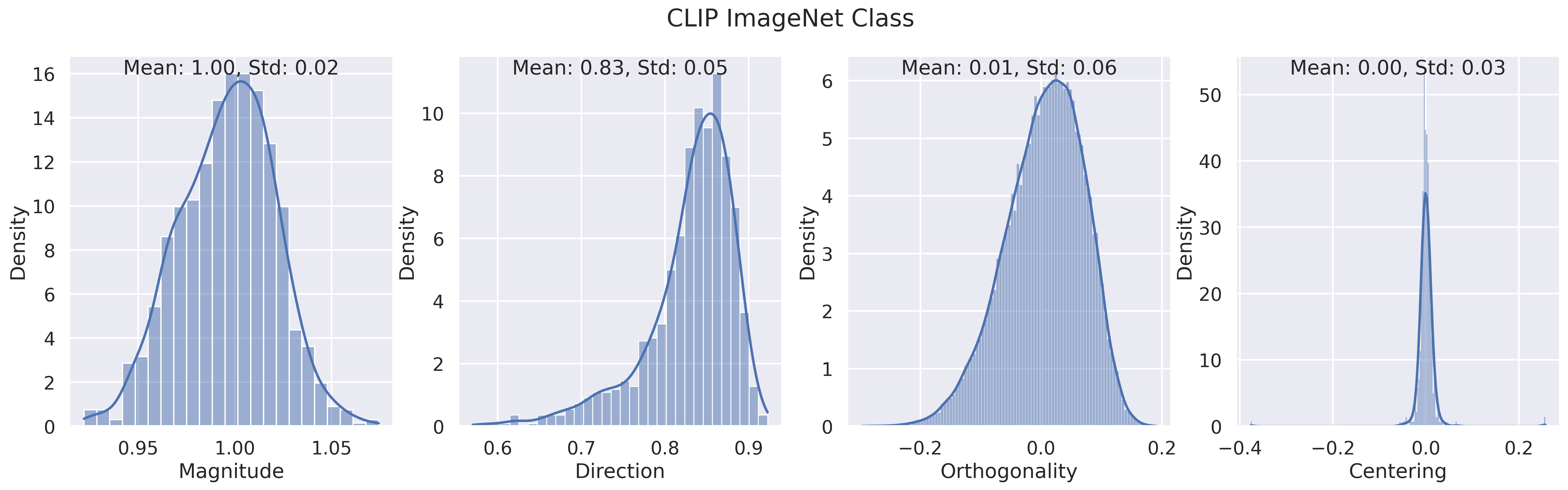}
\includegraphics[width=0.8\textwidth]{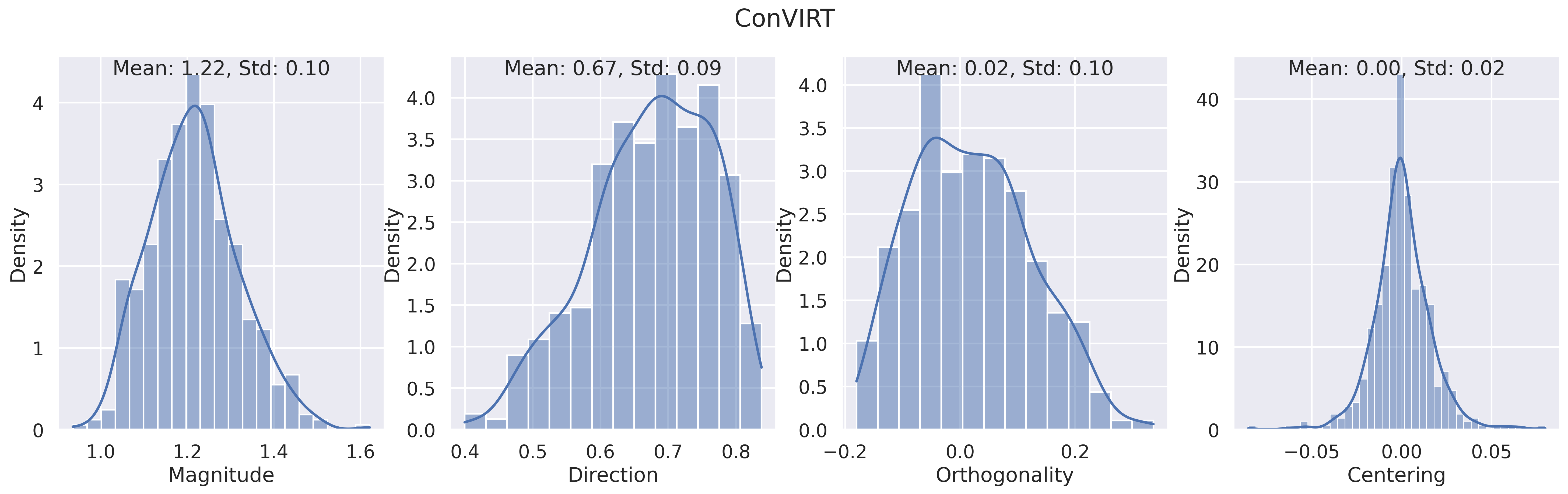}
\includegraphics[width=0.8\textwidth]{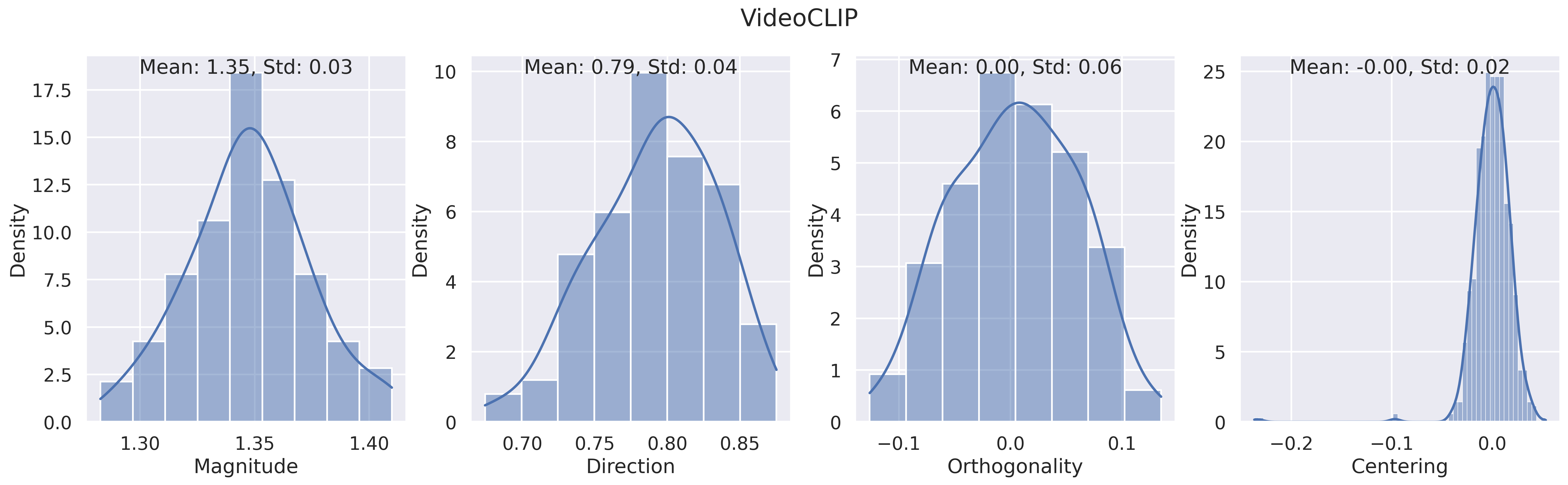}
\includegraphics[width=0.8\textwidth]{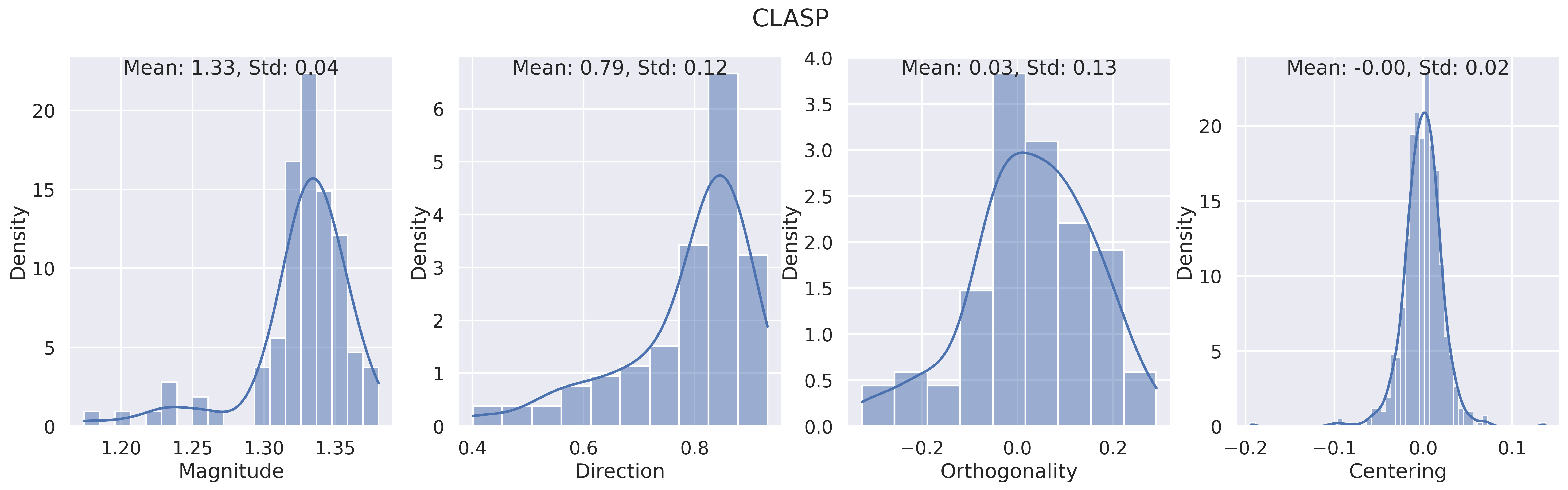}
\caption{
\textbf{Geometry analysis of modality gap for various multi-modal contrastive representation spaces.}
The modality gap approximates a constant vector, indicated by the magnitude and direction distributions. 
Modality gap is also orthogonal to the span of embeddings from two modalities, and embeddings' centers for both two modalities are zero vectors in the subspace orthogonal to the gap, indicated by the orthogonality and centering distributions.
}
\label{fig:supp_gap_geometry}
\end{figure}

\subsection{Theoretical Proof for Cross-Modal Transferability}
\label{sec:supp:cross_modal_transferability_theory}

In this section, we expand and formally discuss what is in section~\ref{sec:approach:cross_modal_transferability}. We theoretically explain the intriguing cross-modal transferability phenomenon. We explain why the modality gap in the multi-modal representation space does not prevent cross-modal transferability because of the unique geometry of the modality gap.

\newcommand{\losszo}{\mathcal{L}_{0\text{-}1}}
\newcommand{\lossmargin}{\mathcal{L}_{\text{margin}}}

For class $c\in[|\mathcal{C}|]$, let $\ve_c\in\{0, 1\}^{|\mathcal{C}|}$ be a one-hot vector such that the $c$-th dimension is $1$ and other dimensions are $0$. 
We define the following balanced target label vector $\tilde{\ve}_c := \ve_c - \mathbb{E}_{c'} [\ve_{c'}]$, where the expectation is over the distribution of classes on the image domain. 

We consider learning a linear function $h_W(\vu) = \mW\vu$, where $\mW\in\mathbb{R}^{|\mathcal{C}|\times d}$ is the weight matrix and $\vu\in\mathbb{R}^d$ is the image or text embedding. Given $h_W(\vu)$ and a label $c$, we consider the following quadratic loss:
\begin{align*}
    \lossq(h_W(\vu), c) = \|h_W(\vu) - \tilde{\ve}_c\|_2^2.
\end{align*}

The following proposition shows that when the gap between image and caption embeddings is the same for all image-caption pairs and is orthogonal to the embedding span for each modality, a linear model trained to minimize the quadratic loss on one modality transfers to the other modality without loss of accuracy.

\begin{proposition}
\label{proposition:1}
Suppose there exists a gap vector $\vg\in\mathbb{R}^d$ such that every pair of image embedding $\vx$ and caption embedding $\vy$ satisfies $\vg=\vx-\vy$. Suppose the gap $\vg$ is orthogonal to the span of image features (i.e., $\vg^T\vx=\vg^T\vx'$ for two image embeddings $\vx$ and $\vx'$), and the image features have zero mean in the subspace orthogonal to $\vg$ (i.e., $\mathbb{E}_x[\Pi_g(\vx)]=\mathbf{0}$ where $\Pi_g(\vx)$ projects the vector $\vx$ to the subspace orthogonal to $\vg$).
Then, for any $\lambda>0$ and linear function $h_W(\vu)$ that minimizes the regularized quadratic loss
$\mathbb{E}_{x, c}[\lossq(h_W(\vx), c)] +\lambda \|\mW\|_F^2$,
we have that
\begin{align*}
% \mathbb{E}_{x, c}[\lossq(h_W(\vx), c)] = \mathbb{E}_{y, c}[\lossq(h_W(\vy), c)].
h_W(\vx) = h_W(\vy)
\end{align*}

Thus, cross-modal transferability happens.
\end{proposition}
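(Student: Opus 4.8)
The plan is to reduce the claim to a statement about the null space of the optimal weight matrix. Since $\vy=\vx-\vg$ for every pair, we have $h_W(\vy)=\mW\vy=\mW\vx-\mW\vg=h_W(\vx)-\mW\vg$, so the desired identity $h_W(\vx)=h_W(\vy)$ is equivalent to $\mW\vg=\mathbf{0}$. Thus it suffices to show that any minimizer $\mW$ of the regularized objective annihilates $\vg$. Because the regularizer $\lambda\|\mW\|_F^2$ is strictly convex for $\lambda>0$ while the expected loss is a convex quadratic in $\mW$, the minimizer is unique, so it is enough to verify that the optimum satisfies $\mW\vg=\mathbf{0}$.

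First I would split the domain $\R^d$ along $\vg$. Writing $\hat{\vg}=\vg/\|\vg\|$, decompose $\mW=\vw\hat{\vg}^T+\mW_\perp$, where $\vw=\mW\hat{\vg}$ captures the action of $\mW$ on $\mathrm{span}(\vg)$ and $\mW_\perp=\mW(\mI-\hat{\vg}\hat{\vg}^T)$ acts only on $\vg^\perp$. The orthogonality hypothesis $\vg^T\vx=\vg^T\vx'$ means the projection $\hat{\vg}^T\vx=:\beta$ is a single constant shared by all image embeddings, so every $\vx$ takes the form $\vx=\beta\hat{\vg}+\vx_\perp$ with $\vx_\perp=\Pi_g(\vx)$, giving $h_W(\vx)=\beta\vw+\mW_\perp\vx_\perp$. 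Substituting this into the loss and expanding $\|\beta\vw+\mW_\perp\vx_\perp-\tilde{\ve}_c\|_2^2$ produces a cross term proportional to $\vw^T(\mW_\perp\,\E_x[\vx_\perp]-\E_c[\tilde{\ve}_c])$. This is exactly where the remaining two assumptions enter: $\E_x[\Pi_g(\vx)]=\mathbf{0}$ by the zero-mean condition and $\E_c[\tilde{\ve}_c]=\mathbf{0}$ by the centering built into the definition of $\tilde{\ve}_c$, so the cross term vanishes.

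With the cross term gone, the expected loss separates as $\beta^2\|\vw\|_2^2+\E_{x,c}[\|\mW_\perp\vx_\perp-\tilde{\ve}_c\|_2^2]$, and since $\mW_\perp\hat{\vg}=\mathbf{0}$ gives $\langle\vw\hat{\vg}^T,\mW_\perp\rangle_F=0$, the regularizer likewise separates as $\|\vw\|_2^2+\|\mW_\perp\|_F^2$. Collecting terms, the full objective equals $(\beta^2+\lambda)\|\vw\|_2^2$ plus a term depending only on $\mW_\perp$; as $\lambda>0$ this is minimized precisely when $\vw=\mathbf{0}$, i.e.\ $\mW\vg=\|\vg\|\,\mW\hat{\vg}=\mathbf{0}$, which yields $h_W(\vx)=h_W(\vy)$. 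I expect the main obstacle to be the bookkeeping that makes both the loss and the regularizer decouple cleanly along $\mathrm{span}(\vg)$ and $\vg^\perp$ at the same time; in particular one must check that both the loss cross term and the Frobenius cross term vanish, which is what forces all three geometric hypotheses to be used together.
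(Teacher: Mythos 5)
Your proof is correct and takes essentially the same route as the paper's: both arguments reduce the claim to showing that any minimizer satisfies $\mW\vg=\mathbf{0}$, decouple the objective along $\mathrm{span}(\vg)$ versus $\vg^\perp$ (your $\beta^2\|\vw\|_2^2$ term is exactly the paper's $\tau^2\|\mW\vg\|_2^2$, since $\beta=\tau\|\vg\|_2$ and $\vw=\mW\vg/\|\vg\|_2$), and invoke $\E_x[\Pi_g(\vx)]=\mathbf{0}$ together with $\E_c[\tilde{\ve}_c]=\mathbf{0}$ to kill the cross term --- the paper merely packages this as a mean--variance decomposition of the loss rather than an explicit rank-one split of $\mW$. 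Your explicit verification that the regularizer also decouples, via $\langle\vw\hat{\vg}^T,\mW_\perp\rangle_F=\vw^T\mW_\perp\hat{\vg}=0$, spells out a step the paper leaves implicit, which is a minor presentational improvement rather than a different method.
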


\begin{proof}[Proof of Proposition~\ref{proposition:1}]
Since $\vg^T\vx=\vg^T\vx'$ for all image features $\vx$ and $\vx'$, we can find a $\tau\in\mathbb{R}$ such that $\vx = \Pi_g(\vx)+\tau \vg$. Notice that
\begin{align*}
\mathbb{E}_{x, c}[\lossq(h_W(\vx), c)] &= \mathbb{E}_{x, c} [\|\mW\vx-\tilde{\ve}_c\|_2^2]\\
&= \|\mathbb{E}_{x}[\mW\vx] - \mathbb{E}_c[\tilde{\ve}_c]\|_2^2 + \mathbb{E}_{x, c}[\|(\mW\vx-\tilde{\ve}_c) - (\mathbb{E}_{x}[\mW\vx] - \mathbb{E}_c[\tilde{\ve}_c])\|_2^2]\\
&= \|\mathbb{E}_{x}[\mW\vx] - \mathbb{E}_c[\tilde{\ve}_c]\|_2^2 + \mathbb{E}_{x, c}[\|\mW\Pi_g(\vx)-\tilde{\ve}_c\|_2^2] \\
&= \|\mW \mathbb{E}_{x}[\Pi_g(\vx)] + \tau \mW \vg - \mathbb{E}_c[\tilde{\ve}_c]\|_2^2 + \mathbb{E}_{x, c}[\|\mW\Pi_g(\vx)-\tilde{\ve}_c\|_2^2].
\end{align*}
Since $\mathbb{E}_x[\Pi_g(\vx)]=\mathbf{0}$ and $\mathbb{E}_c[\tilde{\ve}_c]=\mathbf{0}$, the first term reduces to $\tau^2 \|\mW\vg\|_2^2$. Notice that the second term in the loss decomposition only involves $\mW$'s components that are orthogonal to $\vg$. Thus the minimization of the second term is independent of the minimization of the first term. As a result, any $\mW$ that minimizes the regularized quadratic loss must satisfy $\mW\vg=\mathbf{0}$. 

For a pair of image and text features $\vx,\vy$, since $\vx-\vy=\vg$ and $\mW\vg=\mathbf{0}$, we have $h_W(\vx)=h_W(\vy)$, which finishes the proof.
\end{proof}

\subsection{Additional Cross-modal Transferability Results}

\paragraph{MS-COCO.} In the main paper, we only report image-to-text transfer, where we train a classifier on image embeddings and test on text embeddings. Here we report the full results, including text-to-image transfer, in Table~\ref{tab:supp:coco_transferability}.

\begin{table}[htbp]
\small
\centering

\begin{tabular}{cc|cccc|cccc|c}
\toprule
\multirow{2}{*}{\textbf{Model}} & \multirow{2}{*}{\textbf{Transfer}} & \multicolumn{4}{c|}{\textbf{In-Modal Evaluation}} & \multicolumn{4}{c|}{\textbf{Cross-Modal Evaluation}} & \textbf{Consist} \\
 & & M & Loss$_\downarrow$ & mF1$_\uparrow$ & MF1$_\uparrow$ & M & Loss$_\downarrow$ & mF1$_\uparrow$ & MF1$_\uparrow$ & \textbf{ency}$_\uparrow$ \\
\midrule
Random & - & $\mathcal{I}$ & 0.6939 & 0.0655 & 0.0443 & $\mathcal{T}$ & 0.6938 & 0.0696 & 0.0437 & 0.8644 \\
\midrule
\multicolumn{11}{c}{\textbf{Default Modality Gap}} \\
\multirow{2}{*}{Linear} & $\mathcal{I} \rightarrow \mathcal{T}$ & $\mathcal{I}$ & 0.0501 & 0.7276 & 0.6790 & $\mathcal{T}$ & 0.1188 & 0.5642 & 0.5429 & 0.9637  \\
 & $\mathcal{T} \rightarrow \mathcal{I}$ & $\mathcal{T}$ & 0.0580 & 0.6983 & 0.6631 & $\mathcal{I}$ & 0.1572 & 0.5320 & \textbf{0.4833} & 0.9527 \\
\multirow{2}{*}{MLP} & $\mathcal{I} \rightarrow \mathcal{T}$ & $\mathcal{I}$ & 0.0480 & 0.7523 & 0.7158 & $\mathcal{T}$ & 0.0888 & 0.6350 & 0.6135 & 0.9789 \\
 & $\mathcal{T} \rightarrow \mathcal{I}$ & $\mathcal{T}$ & 0.0572 & 0.7119 & 0.6826 & $\mathcal{I}$ & 0.0750 & 0.6359 & 0.5929 & 0.9795 \\
\midrule
\multicolumn{11}{c}{\textbf{Closing Modality Gap}} \\
\multirow{2}{*}{Linear} & $\mathcal{I} \rightarrow \mathcal{T}$ & $\mathcal{I}$ & 0.0498 & 0.7280 & 0.6777 & $\mathcal{T}$ & \textbf{0.0719} & \textbf{0.6554} & \textbf{0.6168} & \textbf{0.9842} \\
 & $\mathcal{T} \rightarrow \mathcal{I}$ & $\mathcal{T}$ & 0.0578 & 0.6988 & 0.6628 & $\mathcal{I}$ & \textbf{0.0660} & \textbf{0.5782} & 0.4767 & \textbf{0.9858} \\
\multirow{2}{*}{MLP} & $\mathcal{I} \rightarrow \mathcal{T}$ & $\mathcal{I}$ & 0.0483 & 0.7495 & 0.7130 & $\mathcal{T}$ & \textbf{0.0885} & \textbf{0.6503} & \textbf{0.6358} & \textbf{0.9806} \\
 & $\mathcal{T} \rightarrow \mathcal{I}$ & $\mathcal{T}$ & 0.0573 & 0.7073 & 0.6763 & $\mathcal{I}$ & \textbf{0.0685} & \textbf{0.6603} & \textbf{0.6173} & \textbf{0.9801} \\
\bottomrule
\end{tabular}

\caption{
\textbf{Cross-modal transferability in multi-modal contrastive representation learning.} 
We train a classifier using CLIP's image embeddings and test the trained classifier using text embeddings, vice versa, on the MS-COCO multi-label classification dataset.
Despite the modality gap, classification boundaries learned from one modality are transferable to another modality. 
Closing the modality gap further improves cross-modal transferability without harm to in-modal evaluation.
Notations: $\mathcal{I}$ - Image, $\mathcal{T}$ - Text, M - Modality, mF1 - Micro F1, MF1 - Macro F1, Random - A randomly initialized linear model. 
}

\label{tab:supp:coco_transferability}
\end{table}

\paragraph{ImageNet.} In the main paper, we report cross-modal transferability on the MS-COCO dataset. Here we report cross-modal transferability results using the ImageNet dataset~\citep{deng2009imagenet}. We split ImageNet validation set into 40K / 10K images for training / evaluation. We apply OpenAI CLIP's 80 prompts to 1000 ImageNet class names and get 80K texts, and we split them into 64K / 16K for training / evaluation. All the experimental settings are the same as MS-COCO experiments. Results are shown in Table~\ref{tab:supp:imagenet_transferability}. 

Again, despite the modality gap, we find that the classification boundaries learned from one modality are transferable to the other modality.
When a linear classifier is trained on image embeddings and achieves 70.86\% image classification accuracy, directly feeding the text embeddings to the trained classifier achieves 85.24\% accuracy. The transfer from text to image is much worse than from image to text, because the texts we used are generated from prompts and thus lack diversity to train a classifier with good decision boundaries. Closing the modality gap improves the transferability in most cases.

\begin{table}[htbp]
\small
\centering

\begin{tabular}{l|cc|cc}
\toprule
\multirow{2}{*}{\textbf{Split}} & \multicolumn{2}{c|}{\textbf{Image-to-Text}} & \multicolumn{2}{c}{\textbf{Text-to-Image}} \\
 & Linear & MLP & Linear & MLP \\
\midrule
\multicolumn{5}{c}{\textbf{Default Modality Gap}} \\
In-Modal Evaluation & 0.7086 & 0.6687 & 0.9974 &  0.9951 \\
Cross-Modal Evaluation  & 0.8524 & 0.7758 & 0.4953 &  \textbf{0.4552} \\
\midrule
\multicolumn{5}{c}{\textbf{Closing Modality Gap}} \\
In-Modal Evaluation   & 0.7048 & 0.6683  & 0.9978 & 0.9949 \\
Cross-Modal Evaluation  & \textbf{0.8754} & \textbf{0.7892} & \textbf{0.5050} & 0.4438   \\
\bottomrule
\end{tabular}

\caption{
\textbf{Cross-modal transferability in multi-modal contrastive representation learning using the ImageNet dataset.} We split ImageNet validation set 50K images to 40K / 10K for training and evaluation. Texts are generated using OpenAI's 80 prompts multiply by 1000 class names. 
}

\label{tab:supp:imagenet_transferability}
\end{table}

\subsection{Theoretical Intuition for Modality Gap and Cross-Modal Transferability}
\label{sec:supp_theory_crossmodal_transfer_optimization}

In this section, we provide theoretical insights about the intriguing modality gap and cross-modal transferability phenomenon. We show that after optimizing multi-modal contrastive loss, there is a modality gap between image embeddings and text embeddings, and there is a linear classifier trained on image embeddings that is guaranteed to generalize to text embeddings regardless of the modality gap.

\paragraph{Basic Notations.} Given an image $x$ and a text $z$, we use an image encoder $f(\cdot)$ and a text encoder $g(\cdot)$ to map them to the shared $D$-dimensional representation space. We denote $f(x) \in \mathbb{R}^D$ as image embedding and $g(z) \in \mathbb{R}^D$ as text embedding. Given $N$ images and $M$ texts, we denote the concatenated image embedding matrix as $F \in \mathbb{R}^{N \times D}$ and concatenated text embedding matrix as $G \in \mathbb{R}^{M \times D}$, which is shown in the following equation:
\begin{equation}
F = \begin{bmatrix} f(x_1)^T \\ ... \\ f(x_N)^T \end{bmatrix} \in \mathbb{R}^{N \times D}\quad\quad
G = \begin{bmatrix} g(z_1)^T \\ ... \\ g(z_M)^T \end{bmatrix} \in \mathbb{R}^{M \times D}
\label{eq:def_basic}
\end{equation}

\paragraph{Image-Text Connection Graph.} Given an image $x$ and a text $z$, we denote the probability of $(x, z)$ being an image-text pair as $p(x, z)$. With the probability definition, we have $\sum_{x,z} p(x,z) = 1$. Given $N$ images and $M$ texts, we denote the probability matrix as $P \in \mathbb{R}^{N \times M}$. Note that the probability matrix $P$ is a sparse matrix with most elements as zero, because most image-text pairs are mismatched and cannot be collected (e.g., a cat image with a dog caption). $P$ can be viewed as the adjacency matrix of a bi-particle graph $\mathcal{G}=(\{x,z\},\{p(x,z)\})$, where all the images and texts are the vertices of the graph and their connection probabilities are the edges. The adjacency matrix of this graph can be written as the following equation:
\begin{equation}
P = \begin{bmatrix} p(x_1, z_1) & ... & p(x_1, z_M) \\ ... & ... & ... \\ p(x_N, z_1) & ... & p(x_N, z_M) \end{bmatrix}
\label{eq:def_graph}
\end{equation}

We have the following theorem which shows the connection between multi-modal contrastive learning with the partitioning of the connection graph defined above. This result can be viewed as a simple generalization of the results in \cite{haochen2021provable} to the multi-modal setting.

\newtheorem{theorem}{Theorem}

\begin{theorem}[Equivalence of Multi-modal Contrastive Loss and Graph Partitioning]\label{theory:connection_loss}
With the mild assumption that every image and every text pair has equal presence probability $\forall x: p_x = \sum_z p(x,z) = \frac{1}{N}$,  $\forall z: p_z = \sum_x p(x,z) = \frac{1}{M}$, minimizing the multi-modal contrastive loss in Equation~\ref{eq:spec_contrastive} is equivalent to minimizing $\|P-FG^T\|_F^2$:
\begin{equation}
\mathcal{L} = -2 \mathbb{E}_{x,z}\left[f(x)^T g(z) \right] + NM \mathbb{E}_{x \sim P_x, z \sim P_z}\left[\left( f(x)^T g(z) \right)^2 \right]
\label{eq:spec_contrastive}
\end{equation}
\end{theorem}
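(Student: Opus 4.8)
The plan is to show that the contrastive loss $\mathcal{L}$ and the matrix-factorization objective $\|P-FG^T\|_F^2$ differ only by a quantity that is constant in the embeddings $F$ and $G$, so that the two objectives have identical minimizers. The entire argument reduces to expanding the Frobenius norm and translating each resulting term into one of the two expectations appearing in $\mathcal{L}$, using the uniform-marginal assumption at exactly one place.

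First I would expand the squared Frobenius distance as
\begin{equation}
\|P-FG^T\|_F^2 = \|P\|_F^2 - 2\langle P, FG^T\rangle_F + \|FG^T\|_F^2,
\end{equation}
where $\langle A,B\rangle_F=\sum_{i,j}A_{ij}B_{ij}$ denotes the Frobenius inner product. The leading term $\|P\|_F^2=\sum_{x,z}p(x,z)^2$ depends only on the fixed connection graph, not on the learned encoders, and can therefore be set aside when comparing minimizers.

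Next I would match the remaining two terms to the two expectations in $\mathcal{L}$. Since $(FG^T)_{ij}=f(x_i)^T g(z_j)$ and $P_{ij}=p(x_i,z_j)$, the cross term is $\langle P, FG^T\rangle_F=\sum_{x,z}p(x,z)\,f(x)^T g(z)=\mathbb{E}_{x,z}[f(x)^T g(z)]$, where the expectation is over matched pairs drawn from the joint $p(x,z)$. For the quadratic term the equal-presence assumption $p_x=\tfrac1N$ and $p_z=\tfrac1M$ is precisely what is needed: it gives $\mathbb{E}_{x\sim P_x,\,z\sim P_z}[(f(x)^T g(z))^2]=\sum_{x,z}p_x p_z (f(x)^T g(z))^2=\tfrac{1}{NM}\sum_{x,z}(f(x)^T g(z))^2=\tfrac{1}{NM}\|FG^T\|_F^2$, hence $\|FG^T\|_F^2=NM\,\mathbb{E}_{x\sim P_x,\,z\sim P_z}[(f(x)^T g(z))^2]$. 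Substituting both identities yields $\|P-FG^T\|_F^2=\|P\|_F^2+\mathcal{L}$, which is the claimed equivalence up to the additive constant $\|P\|_F^2$, so the two objectives share the same minimizers.

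There is no serious obstacle here; the computation is a routine bilinear-plus-quadratic expansion. The only point demanding care is bookkeeping of the two distinct sampling distributions — the joint $p(x,z)$ governing the alignment (cross) term versus the product of marginals $P_x\otimes P_z$ governing the uniformity (quadratic) term — and applying the uniform-marginal hypothesis only to the quadratic term so that the $NM$ prefactor cancels cleanly. I would also remark that this mirrors the single-modality spectral-contrastive argument of \citet{haochen2021provable}, with the symmetric Gram matrix there replaced by the rectangular image–text product $FG^T$; the single new ingredient is the pair of distinct encoders, which the factorization $FG^T$ absorbs without additional work.
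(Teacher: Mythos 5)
Your proposal is correct and follows essentially the same route as the paper's own proof: expand $\|P-FG^T\|_F^2$ into the cross term, the quadratic term, and the constant $\|P\|_F^2$, discard the constant, identify the cross term with $\mathbb{E}_{x,z}[f(x)^Tg(z)]$ under the joint, and invoke the uniform marginals $p_x=\tfrac1N$, $p_z=\tfrac1M$ exactly once to rewrite $\|FG^T\|_F^2$ as $NM\,\mathbb{E}_{x\sim P_x,\,z\sim P_z}[(f(x)^Tg(z))^2]$. Your bookkeeping of the two sampling distributions and the remark tying the argument to the spectral-contrastive decomposition of \citet{haochen2021provable} match the paper's presentation.
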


\paragraph{Proof of Theorem~\ref{theory:connection_loss}.} The following equation proves Theorem~\ref{theory:connection_loss}:
\begin{align}
\begin{split}
  & \min \|P-FG^T\|_F^2 \\
  = &\min \sum_{x,z} -2p(x,z)f(x)^T g(z) + \sum_{x,z} \left(f(x)^T g(z)\right)^2 + \sum_{x,z} p(x,z)^2 \\
  = &\min \sum_{x,z} -2p(x,z)f(x)^T g(z) + \sum_{x,z} \left(f(x)^T g(z)\right)^2 \\
  = &\min \sum_{x,z} -2p(x,z)f(x)^T g(z) + NM \sum_{x,z} \frac{1}{N} \frac{1}{M} \left(f(x)^T g(z)\right)^2 \\
  = &\min -2 \mathbb{E}_{x,z}\left[f(x)^T g(z) \right] + NM \mathbb{E}_{x \sim P_x, z \sim P_z}\left[\left( f(x)^T g(z) \right)^2 \right] \\ 
\end{split}
\label{eq:proof}
\end{align}

\paragraph{Connection to CLIP Contrastive Loss.} Given $N$ images and $N$ texts, with the assumption $\forall i, j: p(x_i,z_j) = \frac{1}{N} \mathbbm{1}[i=j]$, the CLIP contrastive loss is shown in Equation~\ref{eq:clip_contrastive} and is very similar to the contrastive loss in Equation~\ref{eq:spec_contrastive}.

\begin{align}
\begin{split}
& \mathcal{L}_\text{CLIP}  \\
= & \frac{1}{N} \sum_{i} \left[- \log \frac{\exp \left(f(x_i)^T g(z_i)\right)}{\sum_{j} \exp \left(f(x_i)^T g(z_j)\right)} - \log \frac{\exp \left(f(x_i)^T g(z_i)\right)}{\sum_{j} \exp \left(f(x_j)^T g(z_i)\right)} \right] \\
= & \frac{1}{N} \sum_{i} \left[- 2f(x_i)^T g(z_i) + \log \sum_{j} \exp \left(f(x_i)^T g(z_j)\right) + \log \sum_{j} \exp \left(f(x_j)^T g(z_i)\right) \right] \\
= & \frac{-2}{N} \sum_i f(x_i)^T g(z_i) + \frac{1}{N} \left( \sum_{i} \log \sum_{j} \exp \left(f(x_i)^T g(z_j)\right) + \sum_{i} \log \sum_{j} \exp \left(f(x_j)^T g(z_i) \right) \right) \\
= & -2 \mathbb{E}_{x,z}\left[f(x)^T g(z) \right] + \mathbb{E}_{x \sim P_x, z \sim P_z} \left[\log \sum_{z'} \exp (f(x)^T g(z')) + \log \sum_{x'} \exp (f(x')^T g(z)) \right] 
\end{split}
\label{eq:clip_contrastive}
\end{align}

\newtheorem{proposition2}{Proposition}

Let us now consider the modality gap phenomenon in the above mentioned contrastive learned representation space.

\begin{proposition2}[Modality Gap]\label{proposition:modality_gap}
After optimizing the multi-modal contrastive loss defined in Equation~\ref{eq:spec_contrastive}, image embedding and text embeddings will be separated in the shared presentation space, causing the modality gap phenomenon.
\end{proposition2}

\paragraph{Proof of Proposition~\ref{proposition:modality_gap}.} Since optimizing the multi-modal contrastive loss defined in Equation~\ref{eq:spec_contrastive} is equivalent to minimizing $\|P-FG^T\|_F^2$, achieving the minima of $\|P-FG^T\|_F^2$ does not indicate image embeddings $F$ and text embeddings $G$ are close in the embedding space. For instance, since for any scalar $c>0$, we can scale $F$ by a factor of $c$ and $G$ by a factor of $1/c$ without changing the contrastive loss, we know that there exist many solutions that both achieve the minimal loss and also exhibit modality gap.

Now we consider the cross-modal transferability phenomenon on downstream classification problems using the above mentioned contrastive learned representations. We first introduce the following notations for the downstream task's labels.

\paragraph{Labels of Images and Texts.} Given an image $x$ with the label $y_x \in [C]$, where $C$ is the number of classes, we denote its one-hot representation as $e_{y_x} = [\mathbbm{1}[y_x=1], ...,  \mathbbm{1}[y_x=C]]^T$. Given $N$ images, we denote the image label matrix as $Y_x = [e_{y_{x_1}}^T, ..., e_{y_{x_N}}^T]\in  \{0, 1\}^{N \times C}$. We use similar notations $y_z$, $e_{y_z}$, $Y_z$ for the text.

\newtheorem{assumption}{Assumption}

We make the following assumption which says the label of a text can be predicted by the labels of the images that it is paired with.
\begin{assumption}\label{assumption:1}
With the definition of connection probablity matrix $P$, image label matrix $Y_x$ and text label matrix $Y_z$, we have $P^T Y_x = \frac{1}{M} Y_z$.
\end{assumption}

\paragraph{Intuition of Assumption~\ref{assumption:1}.} In most realistic settings, the connection between an image and a text $p(x,z) > 0$ if $y_x=y_z$ and $p(x,z) = 0$ if $y_x \ne y_z$. Therefore, $\forall z: \sum_x p(x,z) e_{y_x} = \frac{1}{M} e_{y_z}$. The following equation proves the matrix form:
\begin{equation}
P^T Y_x 
= \begin{bmatrix} p(x_1, z_1) & ... & p(x_N, z_1) \\ ... & ... & ... \\ p(x_1, z_M) & ... & p(x_N, z_M) \end{bmatrix} 
\begin{bmatrix} e_{y_{x_1}}^T \\ ... \\ e_{y_{x_N}}^T \end{bmatrix}
= \begin{bmatrix} \sum_i p(x_i,z_1) e_{y_{x_i}}^T \\ ... \\ \sum_i p(x_i,z_M) e_{y_{x_i}}^T \end{bmatrix}
= \frac{1}{M}\cdot \begin{bmatrix} e_{y_{z_1}}^T \\ ... \\ e_{y_{z_M}}^T \end{bmatrix} 
= \frac{1}{M} \cdot Y_z
\label{eq:proof2}
\end{equation}

\begin{proposition2}[Cross-modal Transferability]\label{proposition:cross_modal_transferability} After optimizing the multi-modal contrastive loss introduced in Equation~\ref{eq:spec_contrastive} with infinite encoder dimension $D =\infty$, we can find a linear classifier that only uses image representations but is transferable to text representations. More specifically, the weight of the linear layer is $MF^T Y_x \in \mathbb{R}^{D\times C}$, which can be viewed as a class-mean classifier.\end{proposition2}

\paragraph{Proof of Proposition~\ref{proposition:cross_modal_transferability}.} After optimizing the contrastive loss, we have $P=FG^T$ (Theorem~\ref{theory:connection_loss}). With $P^T Y_x = \frac{1}{M} Y_z$ (Assumption~\ref{assumption:1}), we have $G (MF^T Y_x) = M(FG^T)^T Y_x = MP^T Y_x  = Y_z$. The following equation explains why $MF^T Y_x$ is a class-mean classifier:
\begin{equation}
M F^T Y_x 
= M \begin{bmatrix} f(x_1) & ... & f(x_N) \end{bmatrix}
\begin{bmatrix} e_{y_{x_1}}^T \\ ... \\ e_{y_{x_N}}^T \end{bmatrix}
= M \begin{bmatrix} \sum_x f(x) \mathbbm{1}[y_x = 1] & ... & \sum_x f(x) \mathbbm{1}[y_x = C] \end{bmatrix}
\label{eq:proof3}
\end{equation}

Intuitively, the class-mean linear head is very similar to the trained linear head. For instance, learning the linear head with one step of gradient descent starting from zero initialization would recover the class-mean linear head. We study the class-mean linear head here because it’s more amenable to theoretical analysis. This provides intuition why cross-modal transferability can be achieved regardless of the modality gap.

\section{Datasets and Experimental Details}
\label{sec:supp:datasets_and_experiments}

In this section, we report details of four datasets: MS-COCO~\citep{lin2014microsoft}, Waterbirds~\citep{sagawa2019distributionally}, FairFace~\citep{karkkainen2021fairface}, and dSpritesV~\citep{dsprites17}, and details of two major experiments.

\subsection{Data Pre-processing}

\paragraph{MS-COCO.} We follow the standard MS-COCO dataset split, which includes 118K / 5K images for training / validation. Each image is annotated with multiple objects from 80 categories and five human-written captions. We randomly select one caption from five captions. Therefore, we have 118K / 5K image-caption pairs with multiple labels for training / validation. 

\paragraph{Waterbirds.}  We follow the standard Waterbirds dataset split, which includes 4.8K / 1.2K images for training / validation. Data samples can be viewed in Figure~\ref{fig:supp_domino_id} and~\ref{fig:supp_domino_ood}.

\paragraph{FairFace.} We resample the training set using the demographics from the state of Montana, which includes 92.8\% White, 6.4\% Indian, 0.5\% Asian, and 0.3\% Black. The final dataset contains 17K / 11K images for training / validation. Data samples can be viewed in Figure~\ref{fig:supp_domino_id} and~\ref{fig:supp_domino_ood}.

\paragraph{dSpritesV.} We use our own scripts to reproduce a variant of the dSprites dataset and name it dSpritesV. We use six colors (red, pink, orange, green, cyan, blue), four locations (upper left, upper right, lower left, lower right), and three sizes (small, medium, large) to create triangles and squares with a scale ranging from 0.8 to 1.2. Each attribute is uniformly sampled, and we synthesize 10K images. We only use 80\% orange triangle and 80\% green square for training. Finally, it has 1.3K / 8.7K images for training / validation. Data samples can be viewed in Figure~\ref{fig:supp_domino_id} and~\ref{fig:supp_domino_ood}.

\subsection{Attributes}

Here we list the known attributes during the data collection of the three datasets. We do not cherry-pick attributes and just use these attributes for our experiments.

\paragraph{Waterbirds.}  Two attributes are used: species (200 values) and places (4 places).

\paragraph{FairFace.} Three attributes are used: races (7 values), ages (9 values), and genders (2 values).

\paragraph{dSpritesV.} Three attributes are used: colors (6 values), size (3 values), and shapes (2 values).

\subsection{Prompt Engineering}

We use the prompt engineering techniques proposed in CLIP~\citep{radford2021learning} for our experiments.

\paragraph{Waterbirds.} We use ``\{species\}, \{place\}.'' as the raw prompt, and ``a photo of a \{species\} in the \{place\}.'' as the engineered prompt. Therefore, we can generate $200 \times 4 = 800$ text inputs.

\paragraph{FairFace.} We use ``\{age adjective\}, \{race\}, \{gender\}.'' as the raw prompt, and ``a photo of a \{race\} \{age adjective\} \{gender\}.'' as the engineered prompt. Therefore, we can generate $7 \times 9 \times 2 = 126$ text inputs. The age adjectives are infant (0-2), little (3-9), teenage (10-19), young (20-29), adult (30-39), middle-aged (40-49), senior (50-59), elderly (60-69), and very old (more than 70).

\paragraph{dSpritesV.} We use ``\{size\}, \{color\}, \{shape\}.'' as the raw prompt, and ``\{size\} \{color\} \{shape\}.'' as the engineered prompt. Therefore, we can generate $3 \times 6 \times 2 = 36$ text inputs. 

\subsection{Prompt Ensemble}

We use OpenAI CLIP's 80 prompts~\citep{radford2021learning} to augment text inputs by 80 times. Parts of them are shown in Figure~\ref{fig:supp_prompt_ensemble}.

\begin{figure*}[htb]
\begin{lstlisting}
openai_imagenet_template = [
    lambda c: f"a bad photo of a {c}.",
    lambda c: f"a photo of many {c}.",
    lambda c: f"a sculpture of a {c}.",
    lambda c: f"a photo of the hard to see {c}.",
    lambda c: f"a low resolution photo of the {c}.",
    lambda c: f"a rendering of a {c}.",
    lambda c: f"graffiti of a {c}.",
    lambda c: f"a bad photo of the {c}.",
    lambda c: f"a cropped photo of the {c}.",
    lambda c: f"a tattoo of a {c}.",
    lambda c: f"the embroidered {c}.",
    lambda c: f"a photo of a hard to see {c}.",
    lambda c: f"a bright photo of a {c}.",
    lambda c: f"a photo of a clean {c}.",
    lambda c: f"a photo of a dirty {c}.",
    lambda c: f"a dark photo of the {c}.",
    lambda c: f"a drawing of a {c}.",
    lambda c: f"a photo of my {c}.",
    lambda c: f"the plastic {c}.",
    lambda c: f"a photo of the cool {c}.",
    lambda c: f"a close-up photo of a {c}.",
    lambda c: f"a black and white photo of the {c}.",
    lambda c: f"a painting of the {c}.",
    lambda c: f"a painting of a {c}.",
    lambda c: f"a pixelated photo of the {c}.",
    lambda c: f"a sculpture of the {c}.",
    lambda c: f"a bright photo of the {c}.",
    lambda c: f"a cropped photo of a {c}.",
    lambda c: f"a plastic {c}.",
    lambda c: f"a photo of the dirty {c}.",
    lambda c: f"a jpeg corrupted photo of a {c}.",
    lambda c: f"a blurry photo of the {c}.",
    lambda c: f"a photo of the {c}.",
    lambda c: f"a good photo of the {c}.",
    lambda c: f"a rendering of the {c}.",
    lambda c: f"a {c} in a video game.",
    lambda c: f"a photo of one {c}.",
    lambda c: f"a doodle of a {c}.",
    lambda c: f"a close-up photo of the {c}.",
    lambda c: f"a photo of a {c}.",
    lambda c: f"the origami {c}.",
    lambda c: f"the {c} in a video game.",
    lambda c: f"a sketch of a {c}.",
    lambda c: f"a doodle of the {c}.",
    lambda c: f"a origami {c}.",
    lambda c: f"a low resolution photo of a {c}.",
    lambda c: f"the toy {c}.",
    lambda c: f"a rendition of the {c}.",
    lambda c: f"a photo of the clean {c}.",
    lambda c: f"a photo of a large {c}.",
    lambda c: f"a rendition of a {c}.",
    lambda c: f"a photo of a nice {c}.",
    lambda c: f"a photo of a weird {c}.",
    lambda c: f"a blurry photo of a {c}.",
    lambda c: f"a cartoon {c}.",
    lambda c: f"art of a {c}.",
    lambda c: f"a sketch of the {c}.",
    lambda c: f"a embroidered {c}.",
    lambda c: f"a pixelated photo of a {c}.",
]
\end{lstlisting}
    \caption{A subset of the 80 prompts form OpenAI we use to augment text inputs for prompt ensembling. 
    }
    \label{fig:supp_prompt_ensemble}
\end{figure*}

\subsection{Experimental Details}
\label{sec:supp_dataset_experiment:model}
\paragraph{Model Details.} 

Unless explicitly stated, we use CLIP (ViT-B/32) for all experiments, encoding images and texts in the same 512-dimensional space. Linear layer maps input dimension 512 to the number of classes. Multi-layer perception uses the hidden size as 512.

\paragraph{Cross-modal Transferability Training Details.} 

For each image-caption pair, we use CLIP's image and text encoder~\citep{radford2021learning} to get its image embedding and text embedding. We do not use image augmentation techniques during training and inference. We train the linear model or multi-layer perception for 25 epochs using the Adam optimizer with a fixed learning rate of 0.001. During training, CLIP's image and text encoder are fixed. We pick the best model based on the lowest validation loss on the training modality. 

\paragraph{Classifiers Training Details.} 

For each image in the dataset, we use CLIP's image encoder~\citep{radford2021learning} to get its image embedding. We do not use image augmentation techniques during training and inference. We train the linear model or multi-layer perception for 25 epochs using the Adam optimizer with a fixed learning rate of 0.001. During training, CLIP's image encoder is fixed. We pick the best model based on the lowest validation loss on images. 

\paragraph{Model Rectification Training Details.} 

For each text from error slices generated by attribute composition and prompt engineering, we use CLIP's text encoder~\citep{radford2021learning} to get its text embedding. We continue training the pre-trained linear model or multi-layer perception for 10 epochs using the Adam optimizer with a fixed learning rate of 0.001. During training, CLIP's text encoder is fixed. We pick the best model based on the lowest validation loss on texts.

\subsection{Discussion: Is using Attributes an Appropriate Choice for Model Diagnosis?}

In this work, we heavily rely on attributes for model diagnosis and rectification. A natural question is what is the rationale for using attributes in these processes, and how much will attribute selection affect the validity of our method.

We first clarify that in our experiments, we did not cherry-pick attributes and just used the known attributes from the data curation process of the three datasets. More broadly, we believe attributes are useful for model diagnosis, because it is not hard to define a meaningful set of attributes given a specific task; there are many known attributes given any dataset, and sometimes there may be specific attributes of interest for which we wish to test model vulnerability. For example, for a self-driving car application, we can easily come up with attributes such as weather, traffics, pedestrians, buildings, etc. For any class in ImageNet classification, such as guitar, it is straightforward to think about its color, material, location, size, etc. We agree that an initial set of chosen attributes may not be perfect for reflecting all the essential errors, and better attribute selection can reveal more model vulnerabilities. Therefore, this process can be improved with human involvement to iteratively design better attributes based on the model feedback, which can be useful for future works.

While attribute selection is important and may require human involvement, our method is still very useful because we provide an easy way to test the model under many cases. Like software testing, there is generally no free lunch for model diagnosis, and it is impossible to design a general diagnosis framework for any task without any prior knowledge. We have already significantly reduced the diagnosis cost compared to previous works. Previous works all assume a large collection of labeled images is available for model testing, which is unrealistic given the extreme difficulty of collecting diverse image inputs and the cost of data annotation. Our method instead provides a way to test sensitive attributes for which you may even have no image data. For any specific task, it is always much easier to come up with a meaningful set of attributes and then generate a large collection of novel text inputs by combining different attributes than collecting corresponding images, thanks to the easy-to-manipulate and compositional nature of the language modality. More importantly, the combination of defined attributes naturally defines human-interpretable data slices, whereas image-based slice discovery methods do not directly provide a text summary of the error slice.

Finally, we hope to clarify that one of the main contributions of our work is to theoretically and empirically demonstrate a pervasive phenomenon in multi-modal contrastive learning —— cross-modal transferability, which allows texts to be effective proxies for images. Our method performance, in terms of correlation strength between model performances on images and corresponding texts, is independent of attribute selection. It is just more errors can be discovered with more human involvement in this process. Moreover, it is possible to collect a large set of text inputs in a different way to diagnose vision models instead of using the attribute-based combination. For example, one may be able to prompt large language models such as GPT-3 in a few-shot fashion to generate a large set of descriptions of certain classes, and then feed these inputs into vision models for diagnosis. We leave this to future work. Overall, diagnosing vision models using text modality is much more desirable than image modality, because language enables us to easily generate realistic and diverse inputs with better control and manipulation.

\section{Baselines}
\label{sec:supp:baselines}

\subsection{Language-based Vision Model Diagnosis Baseline: Text-to-Image Generation}
\label{sec:supp:baselines:text_to_image}

In Figure~\ref{fig:supp_text_to_image}, we show the baseline method to diagnose vision models using language --- text-to-image generation. The method generates real images to test models using the text-to-image generation model. 

From the results, we can understand why this baseline is worse than our method, which does not require explicitly generating images. While significant progress has been made in the text-to-image generation field, state-of-the-art text-to-image generation models~\citep{rombach2022high} still fail to generate fidelity images.

In addition, text-to-image generation is computationally expensive, requiring thousands of more computations than our approach.

\begin{figure}[htbp]
\vspace{-40pt}
\centering
\includegraphics[width=0.9 \textwidth]{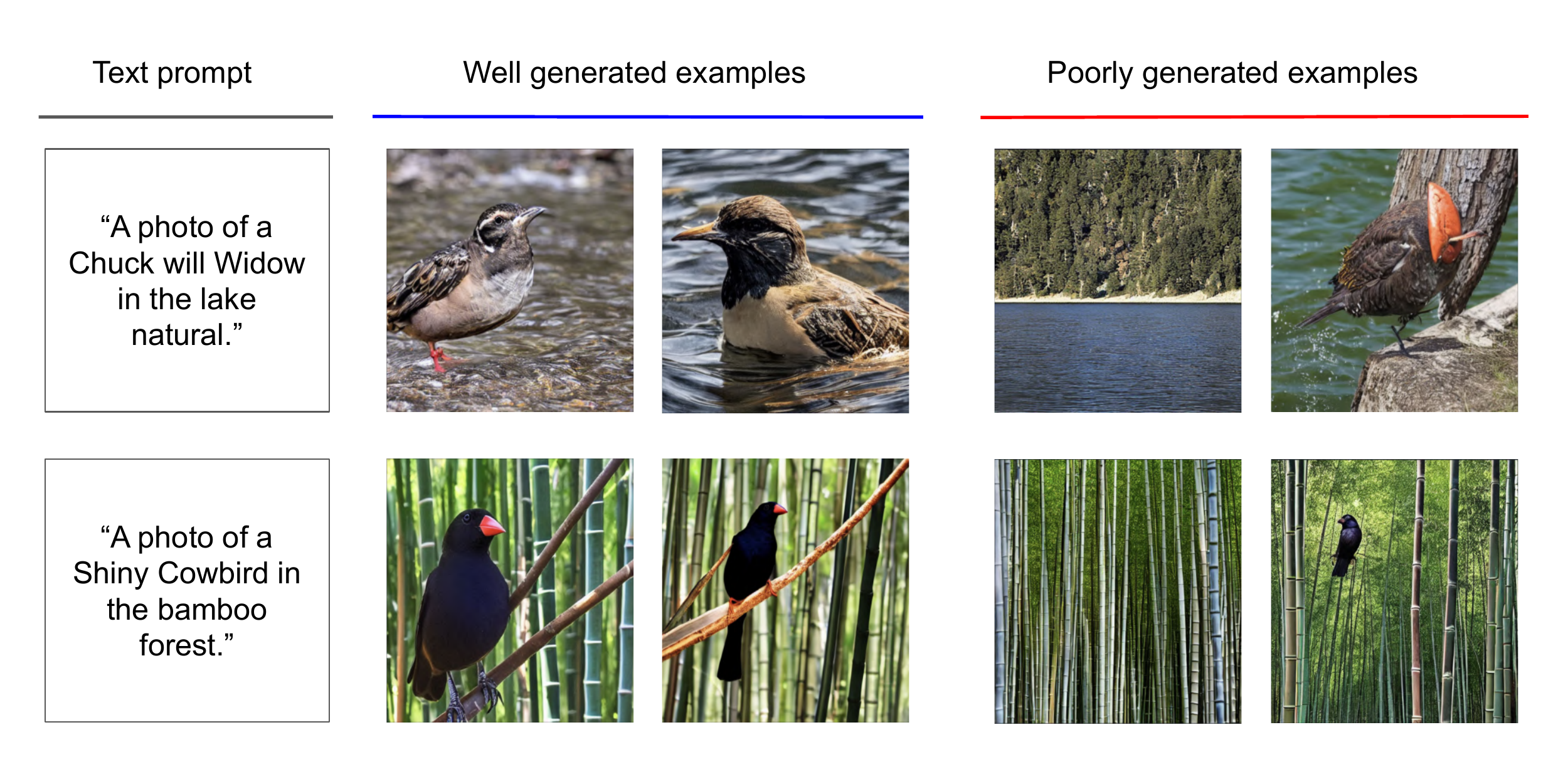}

\small{Well-generated examples (columns 2-3) are realistic and correctly reflect the bird species and background location described in the text prompts. Poorly generated images either do not include the bird species (column 4) or are noticeably unnatural (column 5). Specifically, the top image in column 5 includes a bird with lobster claws as its head --- a rather unusual phenomenon in the real world}

\includegraphics[width= 0.9 \textwidth]{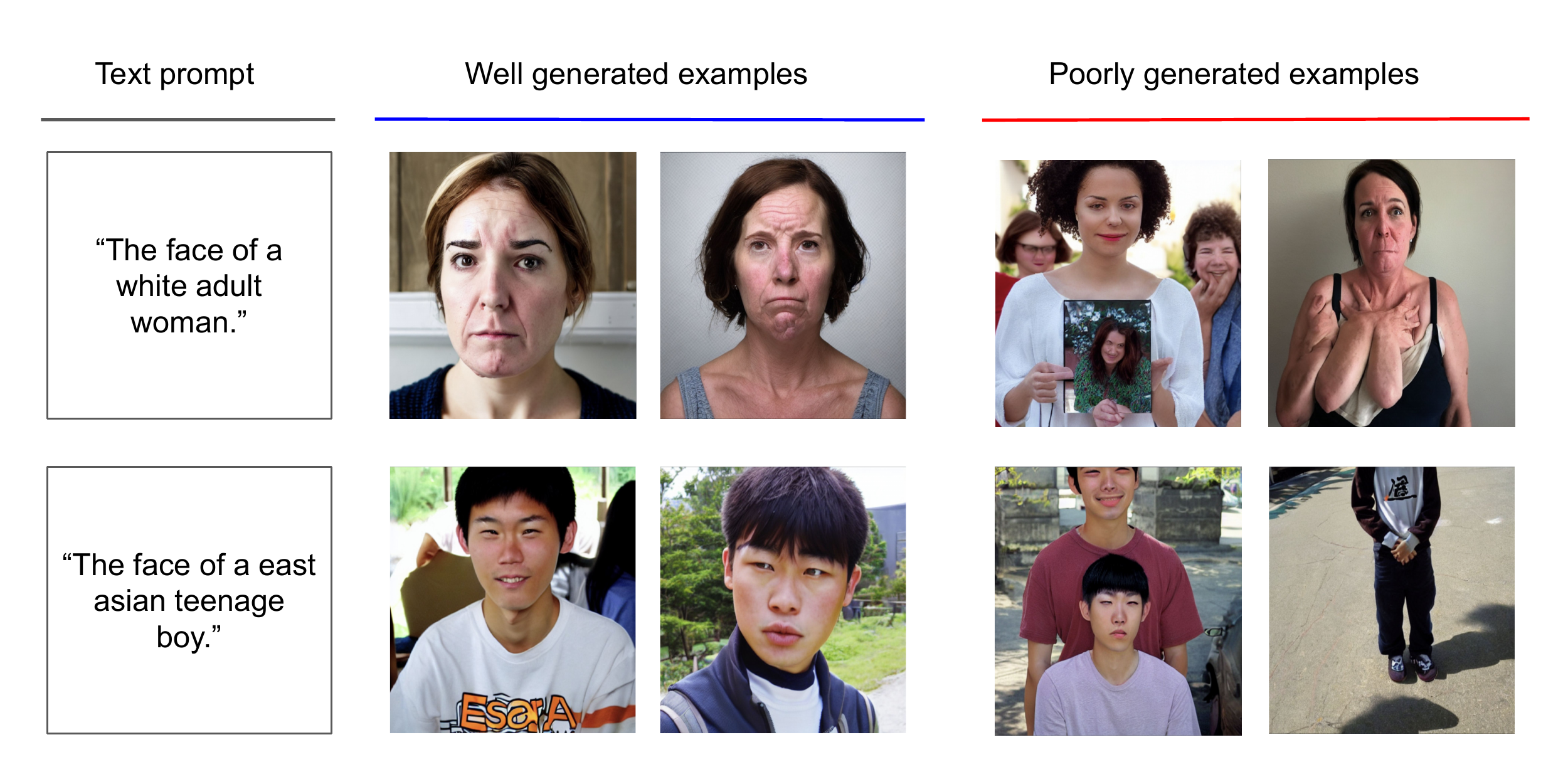}

\small{Well-generated examples (columns 2-3) are realistic and correctly reflect the ethnicity and age group described in the text prompts. Poorly generated images include more than one person (column 4), do not include the person's face (column 5 bottom), or are noticeably unrealistic (column 5 top). For instance, the top image in column 5 includes a woman with arms protruding out of her chest, which is rare in the real world.}

\includegraphics[width=0.9 \textwidth]{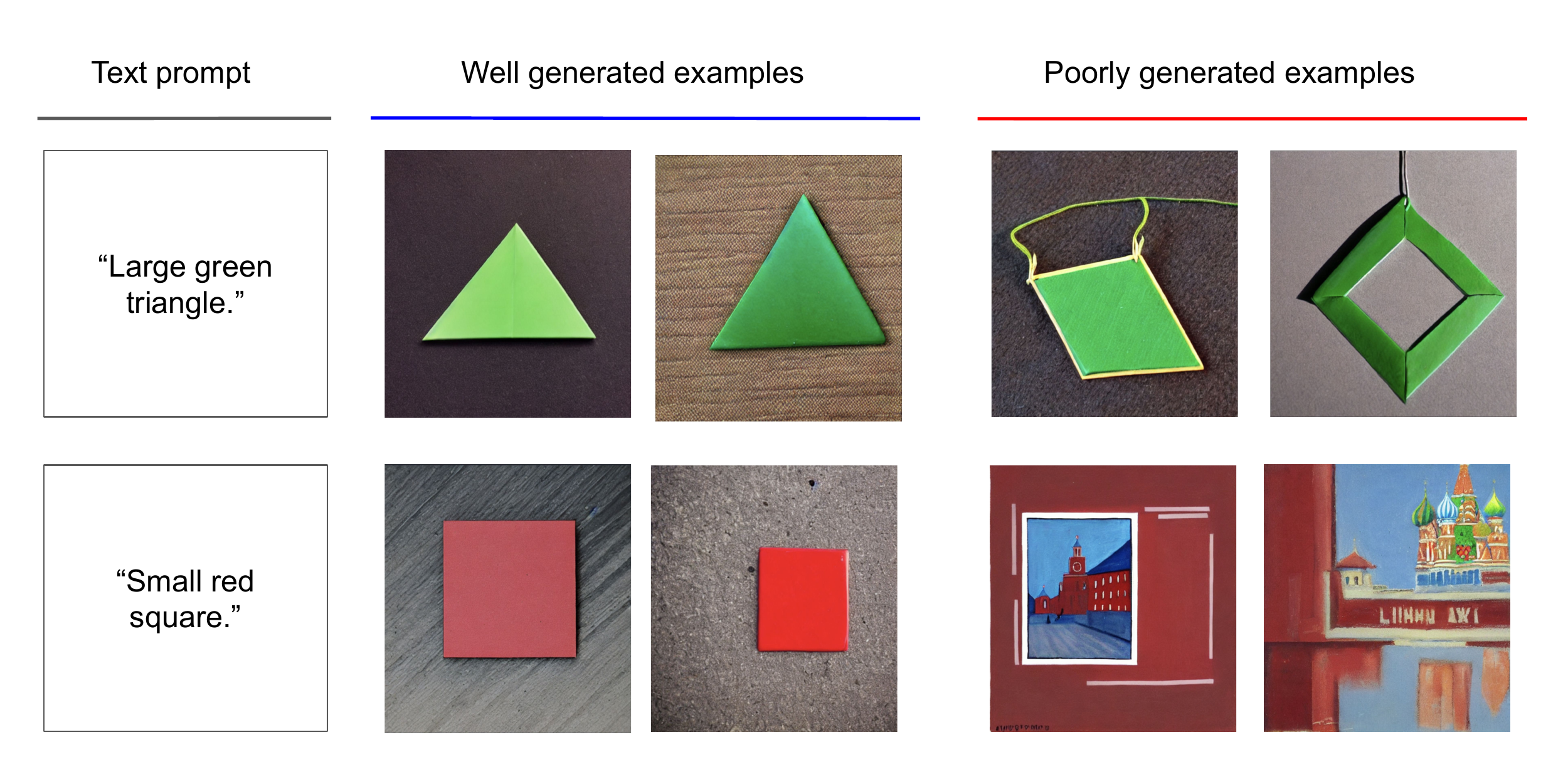}

\small{Well-generated examples (columns 2-3) correctly reflect the shape and color described in the text prompt. Poorly generated images either include incorrect shapes (column 4-5 top) or are misinterpreted due to polysemy (column 4-5 bottom). For instance, the phrase "red square" was misinterpreted by our model as a historic site in Moscow.}

\caption{
Text-to-image generation results on Waterbirds, FairFace, and dSpitesV using the state-of-the-art generation model~\citep{rombach2022high}.
}
\label{fig:supp_text_to_image}
\end{figure}

\subsection{Error Slice Discovery Baseline: DOMINO}
\label{sec:supp:baselines:domino}

In Figure~\ref{fig:supp_domino_id} and~\ref{fig:supp_domino_ood}, we show the discovered error slices using the baseline slice discovery method, DOMINO~\citep{eyuboglu2021domino}. 

In real-world applications, it is unrealistic to assume a large set of labeled images from different distributions is available. Therefore, the most critical challenge for slice discovery is \emph{data}. In this work, we circumvent the data challenge by using language to synthesize extensive test examples.

\begin{figure}[htbp]
\centering
\includegraphics[width=0.97\textwidth]{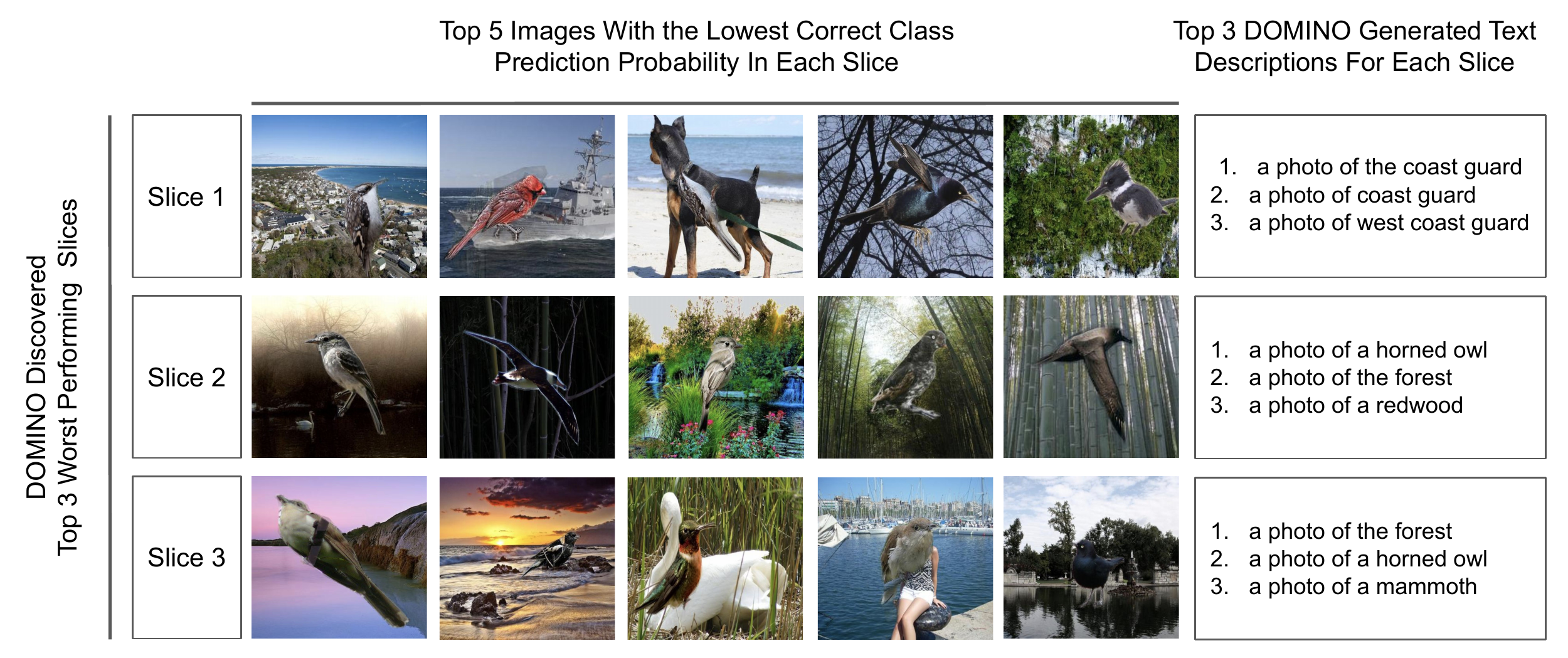}

\small{
    For in-distribution Waterbird, the text descriptions generated from DOMINO do not elucidate which attributes are causing misclassifications. In addition, some of the descriptions are incorrectly generated. For instance, "coast guards" are not present in any of the photos in Slice 1.  
}
\vspace{10pt}

\includegraphics[width=0.97\textwidth]{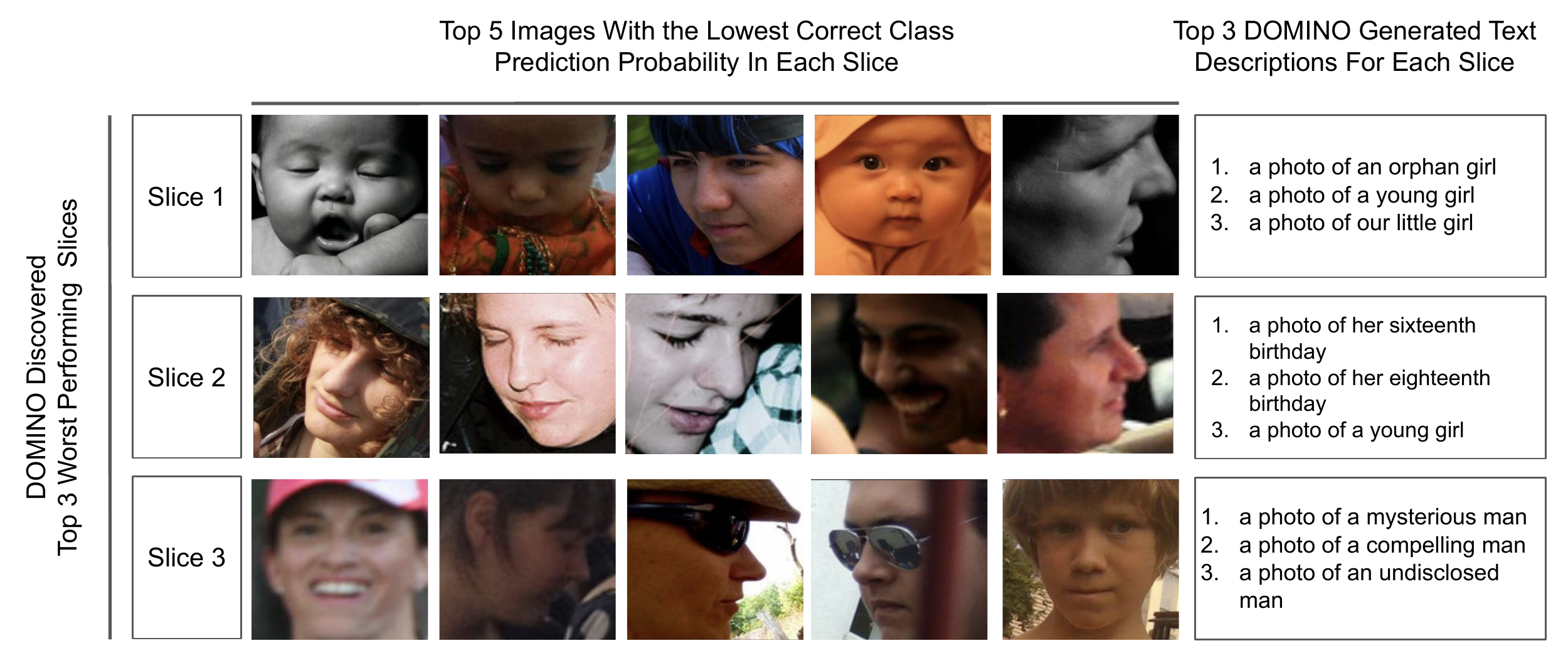}

\small{
    For in-distribution Fairface, DOMINO is unable to discover the correct error slices. Instead of slices for minority groups, DOMINO discovered slices for the prevalent ethnic group in the dataset. Furthermore, the generated descriptions failed to include the key attribute --- race.  
}
\vspace{10pt}

\includegraphics[width=0.97\textwidth]{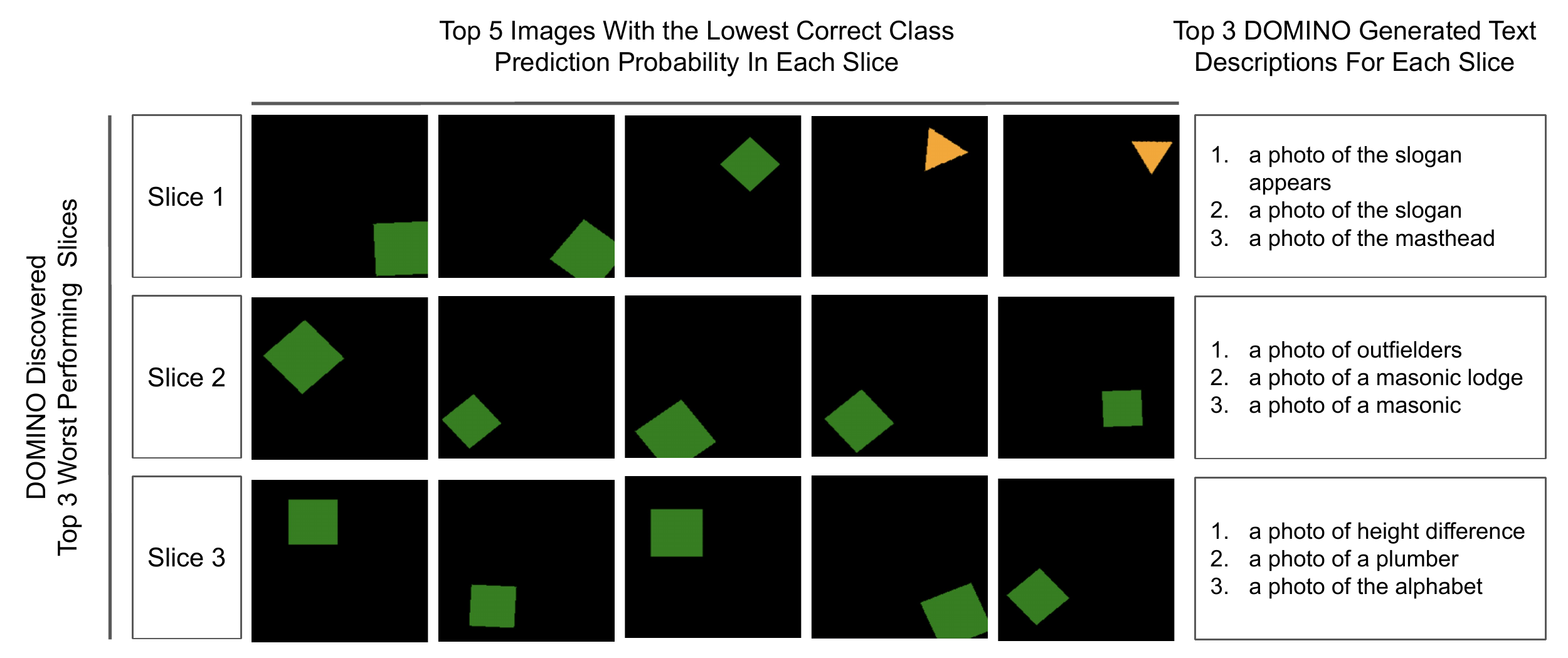}

\small{
    For in-distribution dSpritesV, DOMINO failed to discover the most critical error slices - slices with spurious correlations (orange squares \& green triangles) and slices with unseen data (pink triangle). The generated descriptions also do not reflect images in the slices. 
}
\vspace{10pt}

\caption{
Discovered error slices on \textbf{in-distribution} Waterbirds, FairFace, and dSpitesV datasets using the state-of-the-art slice discovery method DOMINO~\citep{eyuboglu2021domino}. DOMINO is unable to discover most of the top error slices. The generated text descriptions often do not reflect images in the slice accurately. 
}
\label{fig:supp_domino_id}
\end{figure}

\begin{figure}[htbp]
\centering
\includegraphics[width=0.97\textwidth]{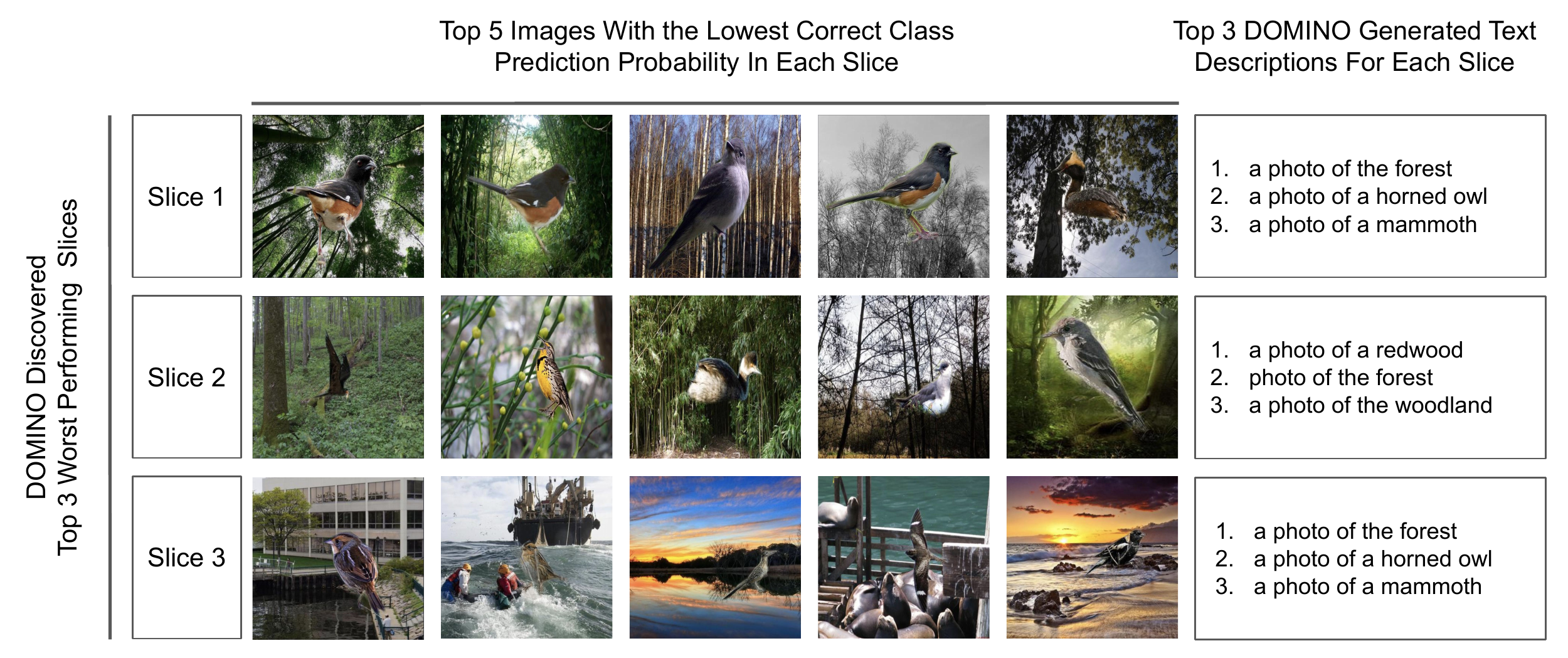}

\small{
    For out-of-distribution Waterbird, DOMINO was unable to generate text that clearly defines the source of misclassification for each slice. Moreover, the some of the descriptions do not accurately describe the images. For example, "mammoths" are never included in any images. 
\vspace{10pt}

\includegraphics[width=0.97\textwidth]{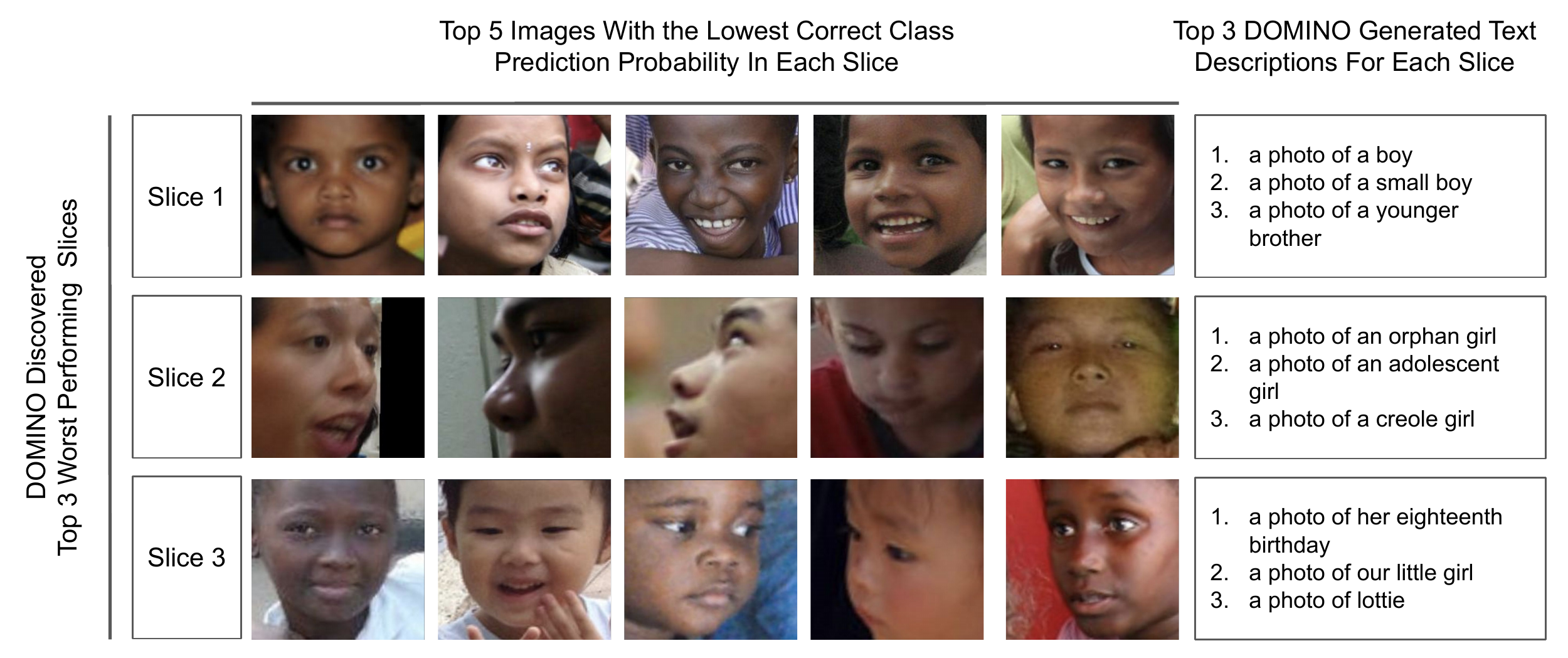}

\small{
    For out-of-distribution Fairface, DOMINO is able to discover the some correct error slices. However, the generated descriptions failed to provide any insights as to why the model fail for these slices. Specifically, the generated descriptions do not include keywords for the race attribute. 
}}
\vspace{10pt}

\includegraphics[width=0.97\textwidth]{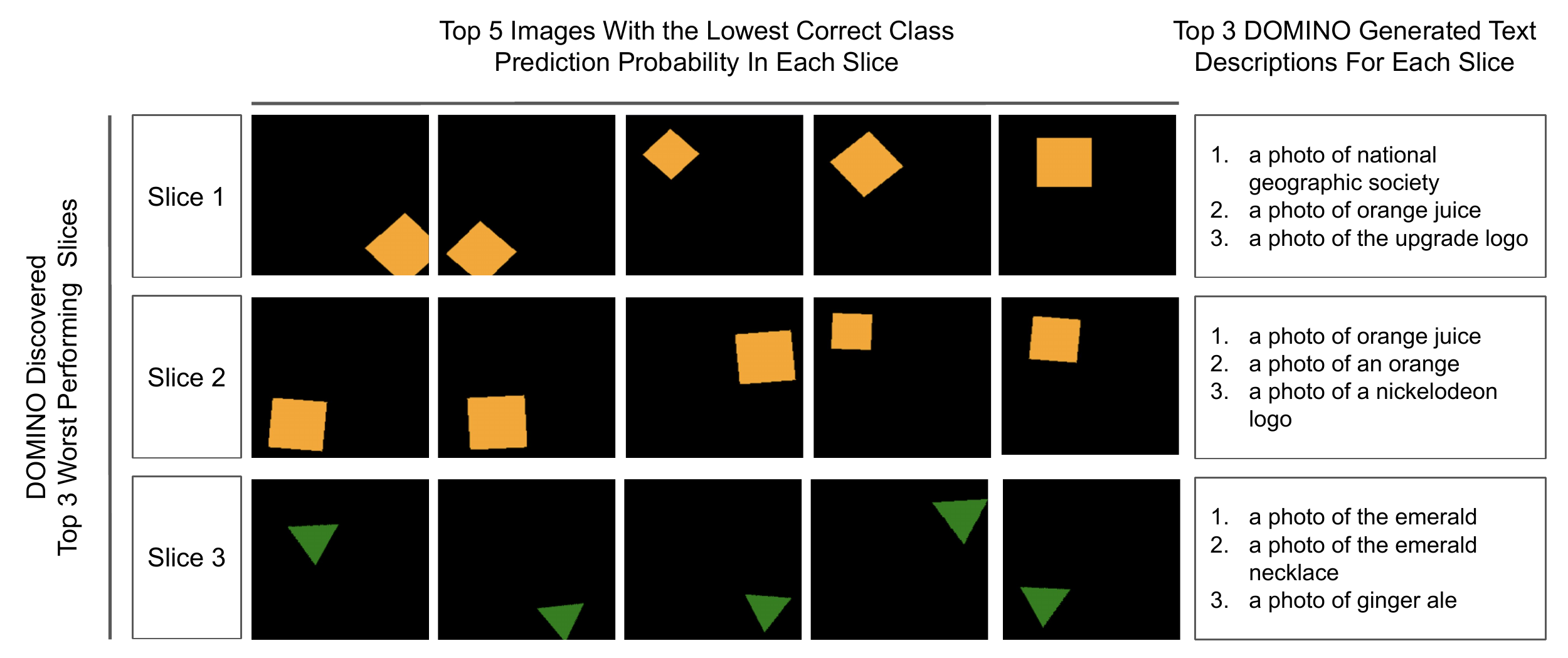}

\small{
    For out-of-distribution dSpritesV, DOMINO is able to discover slices with spurious correlation (orange squares \& green triangles). However, the generated text descriptions do not include attributes that contribute to the spurious correlation. Additionally, DOMINO did not discover slices with unseen data (pink triangles). 
}
\vspace{10pt}

\caption{
Discovered error slices on \textbf{out-of-distribution} Waterbirds, FairFace, and dSpitesV datasets using the state-of-the-art slice discovery method DOMINO~\citep{eyuboglu2021domino}. DOMINO was able to capture some, but not all, error slices. Furthermore, artificially generating out-of-distribution data for evaluation remains challenging in real-world settings.
}
\label{fig:supp_domino_ood}
\end{figure}

\subsection{Model Rectification Baselines: GDRO and JTT}

Our approach rectifies model misbehaviors by correcting the data bias. Another series of methods to tackle these errors are robust training techniques, such as GDRO~\citep{sagawa2019distributionally} and JTT~\citep{liu2021just}, which explicitly optimizes each slice's performance during training. 
Compared to them, one of the distinct advantages of our approach is that we do not require any visual data during the rectification process. Both GDRO and JTT require image data present in the training set. Therefore, they cannot fix errors on unseen data. GDRO even requires attribute annotations on images, which is highly time-consuming and cost-prohibitive for most real-world applications. Moreover, our method can also be combined with robust training techniques when image data and attribute annotations are available, and we leave this to future work.

Here we provide implementation details of GDRO and JTT:

\paragraph{GDRO.} We reproduce GDRO on our datasets by adopting the official GDRO loss implementation to our code base. We use all the same hyperparameters they use in the paper, where important hyperparameters include $l_2$ penalty strength $\alpha=0.2$ and group adjustment $\gamma=0.1$. We train a linear classifier for 25 epochs using the Adam optimizer with a fixed learning rate of 0.001. During training, CLIP's image encoder is fixed. We pick the best model based on the lowest validation loss.

\paragraph{JTT.} We reproduce JTT on our datasets by implementing the algorithm by ourselves. We use all the same hyperparameters they use in the paper. We also perform a hyperparameter search on the upsampling weight $\lambda_\text{up} \in \{5, 20, 50\}$, which is a very important hyperparameter based on the paper. The best $\lambda_\text{up}$ is 20 for Waterbirds and 5 for FairFace. We train a linear classifier for 25 epochs using the Adam optimizer with a fixed learning rate of 0.001 for round 1 and round 2. During training, CLIP's image encoder is fixed. We pick the best model based on the lowest validation loss.

\end{document}